\documentclass{article} %
\usepackage{iclr2027_conference,times}

\usepackage{xspace}
\newcommand{\method}{Prospective Forcing\xspace}

\usepackage{amsmath,amsfonts,bm}

\def\eqref#1{equation~\ref{#1}}

\def\1{\bm{1}}

\DeclareMathAlphabet{\mathsfit}{\encodingdefault}{\sfdefault}{m}{sl}
\SetMathAlphabet{\mathsfit}{bold}{\encodingdefault}{\sfdefault}{bx}{n}

\usepackage{wrapfig}
\usepackage{booktabs}   %
\usepackage{multirow}   %
\usepackage{graphicx}   %
\usepackage{graphicx} 
\usepackage{hyperref}
\usepackage{url}
\usepackage{amsmath,amssymb,amsfonts}
\usepackage{amsthm}
\usepackage{enumitem}
\usepackage{mathrsfs}
\newtheorem{theorem}{Theorem}
\newtheorem{lemma}{Lemma}

\title{Prospective Forcing: Learning to Look Ahead for Autoregressive Long Video Diffusion}

\author{Yifan Shen$^{1,3\dagger}$ \ Yewei Xia$^{1}$ \ Gongxu Luo$^{1,3}$ \ Peiyuan Zhu$^{1}$ \ Boyang Sun$^{1}$ \\
\bf Zhenhao Chen$^{1}$ \ Guangyi Liu$^{1,3}$ \ Guangyi Chen$^{1,2*}$ \ Kun Zhang$^{2,1}$ \\[1pt]
{\normalfont\footnotesize $^{1}$Mohamed bin Zayed University of AI \ \ $^{2}$Carnegie Mellon University \ \ $^{3}$Institute of Foundation Models}
}

\iclrfinalcopy
\begin{document}

\maketitle
{\renewcommand{\thefootnote}{}\footnotetext{$^{\dagger}$Work done during an internship at the Institute of Foundation Models. \quad $^{*}$Corresponding author.}}

\begin{figure}[h]
\centering
\vspace{-0.9cm}
\includegraphics[width=\textwidth]{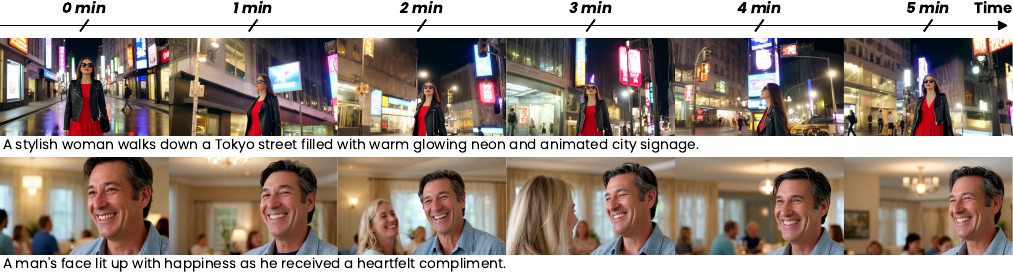}
\vspace{-0.4cm}
\caption{By learning to predict future, Prospective Forcing generates long videos that remain dynamic, temporally consistent, and visually stable from a single text prompt.}
\label{fig:teaser}
\vspace{-0.1cm}
\end{figure}

\begin{abstract}
Autoregressive video generators synthesize long videos through chunk-wise rollouts, yet they are typically trained to predict only the immediate future. This limited prediction horizon makes it challenging to maintain video quality over extended rollouts. To address this problem, we introduce Prospective Forcing, a multi-head prediction framework that injects long-range predictive signals into autoregressive video representations. Built on a rolling-window generator, Prospective Forcing attaches lightweight, offset-specific heads that share the same conditioning information and independently predict chunks at different future temporal offsets. By extending supervision beyond the next chunk, these heads encourage the shared history representation to retain information useful for longer-horizon prediction, supporting temporally consistent generation. At inference time, we accelerate sampling by using accepted future-chunk predictions to skip full-window backbone executions. We equip the head with learnable embeddings to generate multiple candidates for the next chunk, and a learned verifier either selects a candidate or rejects all and falls back to standard autoregressive generation.
\end{abstract}
\vspace{-12pt}

\section{Introduction}
\label{sec:introduction}

\begin{quote}
\itshape
\small
``The future enters into us, in order to transform itself in us,
long before it happens.''

\raggedleft
\normalfont
\small
--- Rainer Maria Rilke, \textit{Letters to a Young Poet}
\end{quote}

Autoregressive video generation provides a natural framework for modeling dynamic visual worlds by extending a video one chunk at a time, which supports generative interactive environments~\citep{pmlr-v235-bruce24a}, real-time game simulation~\citep{ICLR2025_b71ecea2}, robot control~\citep{li2026causalworldmodelingrobot}, and other applications that require coherent long-term visual rollouts. However, maintaining temporal consistency over long videos remains an open challenge. A rollout may remain locally plausible while subject identity, background appearance, spatial layout, color, and object structure gradually drift. These failures become more visible as the generation proceeds. This drift stems from a lack of long-horizon predictive capability, as next-chunk supervision prioritizes immediate continuation without requiring representations to preserve the information needed to anticipate more distant futures. We refer to this myopic behavior as \emph{temporal shortsightedness}.

Existing approaches improve the temporal consistency from different perspectives. Conditioning the model on self-generated histories during training makes later predictions more robust to an imperfectly generated context~\citep{huang2025selfforcingbridgingtraintest}. Rolling-window generation jointly denoises adjacent chunks with progressive noise levels and retains initial states as global anchors, reducing error accumulation while preserving a reference to the beginning of the video~\citep{liu2025rolling}. Training-free methods extend pretrained video generators by rescheduling noise across overlapping temporal windows or maintaining a queue of partially denoised frames~\citep{DBLP:conf/iclr/QiuX0HWS024,DBLP:conf/nips/KimKCH24}. Other approaches combine short-term conditioning on features from the preceding chunk with long-term preservation of scene and object features from the initial chunk~\citep{DBLP:conf/cvpr/HenschelKPHTWNS25}. 

These approaches primarily address exposure bias, inference-time context preservation, or temporal denoising schedules. We introduce \method to address temporal shortsightedness through a complementary training objective. Specifically, the backbone encodes preceding chunks into a shared history representation, and independent heads combine it with target-specific diffusion noise to predict chunks at distinct future offsets. We train these heads through cross-chunk self-distillation using the backbone's own subsequent outputs as the teacher. When a target chunk is reached later in the same causal rollout, the backbone’s detached feature and velocity predictions supervise the corresponding longer-offset prediction from an earlier context. These losses propagate through both the heads and the backbone, encouraging the shared representation to retain information needed across multiple future chunks. The resulting representation captures rapid changes in local motion while preserving relatively stable properties such as object identity and scene structure, consistent with the fast-and-slow perspective of hierarchical causal representation learning~\citep{NEURIPS2025_1c6decac}. We call it a \emph{speculative representation} to reflect its long-term predictive capability. Section~\ref{sec:theory} further analyzes this representation in an idealized setting.

At inference, our default configuration uses the speculative representation to produce multiple candidate continuations. A lightweight learned verifier selects one candidate or rejects them all, in which case generation falls back to the standard autoregressive path. Each accepted candidate extends the video by one additional chunk without another sequential execution of the backbone, reducing the sampling cost. We evaluate on 30-second video generation with multiple metrics, where \method improves video quality, while accelerating sampling. Notably, the improvements persist even when the prediction heads are removed at inference, indicating that the speculative representation itself captures information that benefits long-horizon generation.

Our main contributions are as follows. First, we introduce \method to address temporal shortsightedness by learning a speculative representation that supports explicit prediction of multiple future chunks from the current state, thereby improving long-term video quality. Second, we develop a verifier-guided speculative inference procedure that reuses the prediction heads to extend rollouts without additional sequential backbone executions. Third, we demonstrate improved video quality across multiple video generation metrics alongside sampling speedups.

\section{Related Work}

\label{sec:related_work}

\textbf{Autoregressive video generation.}
Unlike bidirectional video diffusion models which jointly denoise a fixed temporal window~\citep{blattmann2023stablevideodiffusion,yang2024cogvideox,kong2024hunyuanvideo,wan2025wan}, autoregressive models generate frames or chunks sequentially, enabling low-latency streaming with KV caching. Existing methods differ mainly in how the causal generator is trained: with per-token noise levels~\citep{chen2024diffusionforcing}, by distilling few-step causal students from bidirectional or causal teachers via distribution matching~\citep{yin2024dmd,yin2025causvid,huang2025selfforcingbridgingtraintest,zhu2026causalforcing,zhao2026causal}, or end-to-end without a teacher~\citep{guo2025resamplingforcing}.

\textbf{Long video generation.}
To mitigate drift over long rollouts, prior methods either align training with long-horizon inference~\citep{cui2026selfforcingplusplus,yang2026longlive,chen2026contextforcing,lin2026dynamem}, design denoising schedules and anchor frames that preserve early context~\citep{liu2025rolling,DBLP:conf/cvpr/HenschelKPHTWNS25}, or manage the KV cache and positional encoding at inference time without additional training~\citep{yi2026deepforcing,cui2026lol}. Instead, our method improves long video generation by learning representations with long-horizon predictive capabilities.

\textbf{Future-predictive representations.}
Predicting future observations has long served as an auxiliary objective for learning sequential representations in reinforcement learning~\citep{pmlr-v119-guo20g,schwarzer2021spr}, image generation~\citep{yu2026mirai}, language modeling and speculative decoding~\citep{pmlr-v235-gloeckle24a,li2024eagle,cai2024medusa}. Concurrent video works such as Video-Mirai~\citep{yu2026videomirai} and Next Forcing~\citep{xu2026nextforcing} incorporate future information via JEPA-style representation alignment~\citep{,bardes2024vjepa} and recurrent future-chunk prediction, respectively. In contrast, \method learns a shared representation for parallel multi-chunk prediction with independent heads, and reuses these heads for verifier-guided speculative inference. A more detailed discussion is provided in Appendix~\ref{app:related_work}.

\section{Preliminaries}

\label{sec:preliminaries}

\paragraph{Autoregressive video generation.} We formulate long-video generation as autoregressive prediction over latent video chunks. Let $\mathbf{x}_{1:T}=\{\mathbf{x}_1,\ldots,\mathbf{x}_T\}$ denote a sequence of video chunks and $\mathbf{z}_{1:T}=\{\mathbf{z}_1,\ldots,\mathbf{z}_T\}$ their corresponding latents encoded by a pretrained VAE. Given a text condition $\mathbf{c}$, a causal generator factorizes the conditional distribution as
\begin{equation}
p_\theta\left(\mathbf{z}_{1:T}\mid\mathbf{c}\right) = \prod_{t=1}^{T} p_\theta\left(\mathbf{z}_t\mid\mathbf{z}_{<t},\mathbf{c}\right),
\label{eq:ar_factorization}
\end{equation}
where $\mathbf{z}_{<t}=\{\mathbf{z}_1,\ldots,\mathbf{z}_{t-1}\}$ denotes the preceding latent chunks and $t$ indexes chunks in the autoregressive rollout. This factorization specifies the causal dependence among committed video chunks, but does not require each chunk to be denoised in isolation. We use $\tau\in[0,1]$ to denote diffusion time. Under the rectified-flow parameterization~\citep{liu2023rectified}, the forward corruption process is
\begin{equation}
\mathbf{z}_t(\tau) = (1-\tau)\mathbf{z}_t+\tau\boldsymbol{\epsilon}_t, \qquad \boldsymbol{\epsilon}_t\sim\mathcal{N}(\mathbf{0},\mathbf{I}).
\label{eq:forward}
\end{equation}
A velocity-prediction network transforms noisy latents toward clean samples while conditioning on the text prompt and the available causal context. Sampling evolves from Gaussian noise at $\tau=1$ toward clean latents at $\tau=0$, with the resulting video chunks progressively committed to the autoregressive context. The key--value states of previously committed chunks can be cached to avoid repeated computation.

\paragraph{Distribution matching distillation.} CausVid~\citep{yin2025causvid} distills a pretrained bidirectional video diffusion model into a few-step causal generator using distribution matching distillation (DMD)~\citep{yin2024dmd}. Let $\hat{\mathbf{z}}\sim p_\theta(\cdot\mid\mathbf{c})$ denote a latent video sampled from the causal generator, and let $\hat{\mathbf{z}}(\tau)$ denote its noisy version obtained using~\eqref{eq:forward}. Let $\theta^\star$ denote the frozen parameters of the teacher model. DMD trains the generator to match the teacher model by minimizing the reverse KL divergence between their noise-perturbed sample distributions:
\begin{equation}
\mathcal{L}_{\mathrm{DMD}} = \mathbb{E}_{\mathbf{c},\tau}\left[w(\tau)\,D_{\mathrm{KL}}\left(p_{\theta,\tau}(\cdot\mid\mathbf{c})\,\|\,p_{\theta^\star,\tau}(\cdot\mid\mathbf{c})\right)\right],
\label{eq:dmd}
\end{equation}
where $p_{\theta,\tau}$ and $p_{\theta^\star,\tau}$ denote the distributions of noisy generator and teacher samples, respectively. Here, $w(\tau)$ is a timestep-dependent weighting function. The gradient of this objective is estimated using the difference between the generator and teacher scores, where a score function is the gradient of the log probability density with respect to the noisy latent.

\begin{wrapfigure}{r}{0.61\linewidth}
  \vspace{-12pt}
  \centering
  \includegraphics[width=\linewidth]{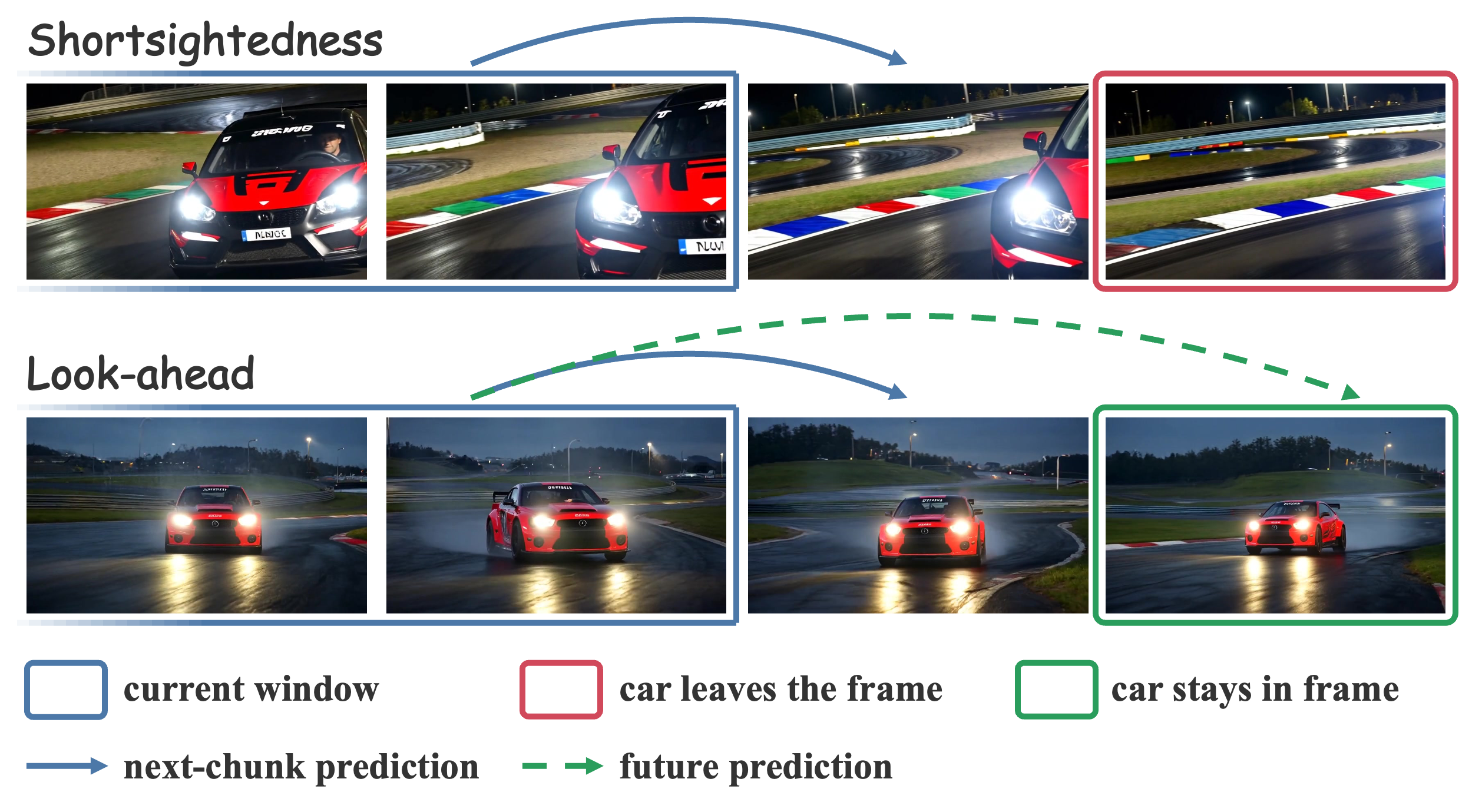}
  \vspace{-16pt}
  \caption{\textbf{Temporal shortsightedness.} The car gradually drives out of view with only next-chunk prediction, and foreseeing future chunks keeps it in frame.}
  \label{fig:shortsightedness}
  \vspace{-10pt}
\end{wrapfigure}

To bridge the train--test distribution gap, Self Forcing~\citep{huang2025selfforcingbridgingtraintest} extends this framework with autoregressive self-rollout training and a DMD loss computed on the entire generated sequence. Rolling Forcing~\citep{liu2025rolling} further jointly denoises multiple chunks at different noise levels~\citep{chen2024diffusionforcing} within a rolling window to enhance temporal consistency. However, the resulting generator still lacks an explicit mechanism for anticipating future chunks, a limitation we term \emph{temporal shortsightedness}, as shown in Figure~\ref{fig:shortsightedness}.

\section{Prospective Forcing}

\label{sec:prospective_forcing}

\begin{figure}[t]
  \centering
  \includegraphics[width=\linewidth]{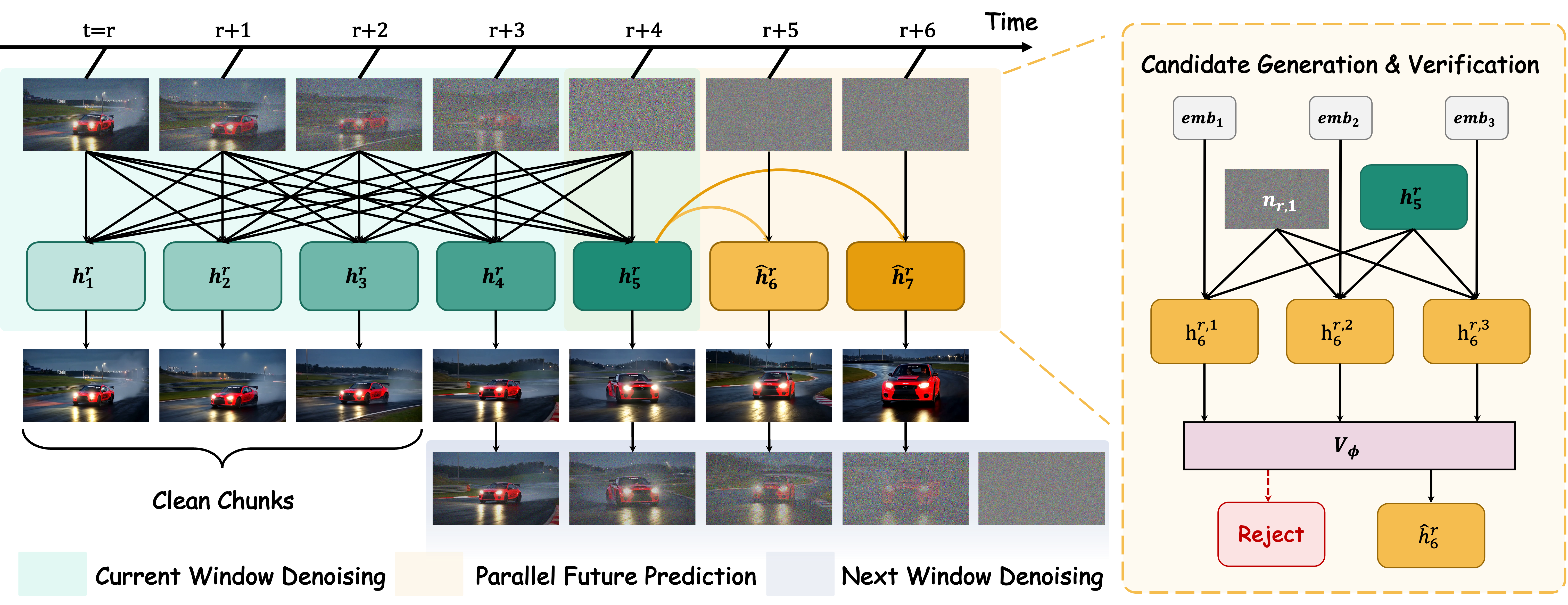}
   \vspace{-0.5cm}
  \caption{\textbf{Illustration of Prospective Forcing.} \textbf{Left}: We jointly denoise $W=5$ chunks in window $r$, yielding features $\mathbf{h}_1^r$ to $\mathbf{h}_5^r$ (green). Then $K=2$ heads predict $\hat{\mathbf{h}}_6^r$ and $\hat{\mathbf{h}}_7^r$ in parallel from $\mathbf{h}_5^r$ (orange). Leading clean chunks are committed and the window advances accordingly (blue). \textbf{Right}: $M=3$ candidates are generated for the next chunk $\hat{h}_6^r$, and the verifier $V_\phi$ selects one or rejects all.}
  \label{fig:overview}
  \vspace{-0.3cm}
\end{figure}

We propose \emph{Prospective Forcing}, which mitigates temporal shortsightedness by learning a speculative representation for future prediction (Figure~\ref{fig:overview}). Within a rolling window, we jointly denoise $W$ chunks at different noise levels, yielding features $\mathbf{h}^r_1, \dots, \mathbf{h}^r_W$ from the backbone. $K$ lightweight future prediction heads then predict the features of the next $K$ chunks, $\hat{\mathbf{h}}^r_{W+1}, \dots, \hat{\mathbf{h}}^r_{W+K}$, in parallel from $\mathbf{h}^r_W$, and a shared projection network maps them to velocities. For the next chunk, the first head further generates $M$ candidates, each conditioned on a distinct learned candidate embedding, and a verifier $V_\phi$ is trained to score them. If the best candidate satisfies the acceptance criterion, it is accepted; otherwise, all candidates are rejected. The accepted prediction is used together with the outputs in the window to produce clean chunks. Finally, the preceding clean chunks are committed, and the last $W-1$ chunks are renoised to form the window for the next iteration.

\textbf{Speculative representation learning.} Our generator processes consecutive latent chunks within a rolling window of $W$ chunks. Let $\mathcal{W}_r$ denote the window whose first chunk is at time step $r$, so that it covers chunks $r,\ldots,r+W-1$. Throughout this section, a superscript $r$ indicates that a quantity is computed in window $\mathcal{W}_r$, and a subscript $i\in\{1,\ldots,W\}$ indicates a position within the window, corresponding to chunk $r+i-1$. Each position is assigned a fixed diffusion time from $\boldsymbol{\tau}=(\tau_1,\ldots,\tau_W)$, with earlier positions at lower noise levels. Let $\mathbf{z}_i^r$ denote the noisy latent obtained by applying ~\eqref{eq:forward} to chunk $r+i-1$ at diffusion time $\tau_i$. The window is then 
$\mathcal{W}_r = \left(\mathbf{z}_1^r,\ldots,\mathbf{z}_W^r\right).$
We denote the generator backbone by $F_\theta$, and represent the cached context from previously committed chunks by $\mathcal{C}_r$. Conditioned on the text prompt $\mathbf{c}$, the backbone produces features for all positions in the window:
\begin{equation}
\left(\mathbf{h}_1^r,\ldots,\mathbf{h}_W^r\right) = F_\theta\left(\mathcal{W}_r,\boldsymbol{\tau},\mathbf{c},\mathcal{C}_r\right),
\label{eq:window_features}
\end{equation}
where $\mathbf{h}_i^r$ denotes the final layer features at position $i$ of $\mathcal{W}_r$. The shared velocity projection $P_\theta$ maps the features at each position to the corresponding velocity output $\mathbf{v}_i^r$:
\begin{equation}
\mathbf{v}_i^r = P_\theta\left(\mathbf{h}_i^r,\tau_i\right), \qquad i=1,\ldots,W.
\label{eq:velocity_projection}
\end{equation}

We use the backbone features of the newest chunk, $\mathbf{h}_W^r$, as the shared input to $K$ lightweight future prediction heads $G_k$, indexed by $k=1,\ldots,K$. Each head estimates features for chunk $r+W+k-1$, the $k$-th chunk after the window, which we index as position $W+k$. We sample the Gaussian noise $\boldsymbol{\epsilon}_{r+W+k-1}$ of these future chunks in advance and embed it into tokens $\mathbf{n}_{r,k}$ with the patch embedding of the backbone. The predicted features $\hat{\mathbf{h}}_{W+k}^r$ and corresponding velocities $\hat{\mathbf{v}}_{W+k}^r$ are obtained as
\begin{equation}
\begin{aligned}
\hat{\mathbf{h}}_{W+k}^r &= G_k\left(\mathbf{h}_W^r,\mathbf{n}_{r,k}\right),\quad
\hat{\mathbf{v}}_{W+k}^r &= P_\theta\left(\hat{\mathbf{h}}_{W+k}^r,\tau_W\right).
\end{aligned}
\label{eq:future_prediction}
\end{equation}
The shared projection $P_\theta$ decodes each prediction at $\tau_W$, the diffusion time assigned to an incoming chunk. The pre-sampled noise is reused when the corresponding chunk enters the window, ensuring that the prediction and its later target refer to the same noise realization.

We align these predictions with the corresponding backbone outputs in subsequent windows using cross-chunk self-distillation. After $k$ advances, chunk $r+W+k-1$ enters $\mathcal{W}_{r+k}$ at its last position $W$, where the backbone produces its features $\mathbf{h}_W^{r+k}$ and velocity $\mathbf{v}_W^{r+k}$. We use both outputs as supervision targets and stop gradients through them. Note that we do not skip any window at training time, so the supervision can always be provided. We normalize features before alignment to reduce sensitivity to their scale. Velocities are matched in their original space because both their direction and magnitude determine the denoising updates. For two feature tensors $\mathbf{a}$ and $\mathbf{b}$, we define a distance $d_{\mathrm{feat}}$ that combines mean squared error and cosine distance after layer normalization:
\begin{equation}
d_{\mathrm{feat}}(\mathbf{a},\mathbf{b}) ={} \frac{1}{2}\operatorname{MSE}\left(\operatorname{LN}(\mathbf{a}),\operatorname{LN}(\mathbf{b})\right)
+ 1-\operatorname{Cos}\left(\operatorname{LN}(\mathbf{a}),\operatorname{LN}(\mathbf{b})\right),
\label{eq:feature_distance}
\end{equation}
where $\operatorname{LN}$ denotes layer normalization~\citep{ba2016layer} applied along the feature dimension of each token. $\operatorname{MSE}$ averages squared errors over all elements, and $\operatorname{Cos}$ averages cosine similarity over corresponding tokens. Let $\operatorname{sg}$ denote the stop gradient operator. The feature and velocity alignment losses are
\begin{equation}
\small
\mathcal{L}_{\mathrm{feat}} = \mathbb{E}\left[\frac{1}{K}\sum_{k=1}^{K}d_{\mathrm{feat}}\left(\hat{\mathbf{h}}_{W+k}^r,\operatorname{sg}\left[\mathbf{h}_W^{r+k}\right]\right)\right],\quad
\mathcal{L}_{\mathrm{vel}} = \mathbb{E}\left[\frac{1}{K}\sum_{k=1}^{K}\operatorname{MSE}\left(\hat{\mathbf{v}}_{W+k}^r,\operatorname{sg}\left[\mathbf{v}_W^{r+k}\right]\right)\right],
\label{eq:future_alignment}
\end{equation}
where the expectation is taken over training rollouts. These alignment losses encourage $\mathbf{h}_W^r$ to encode information useful for predicting future chunks. We refer to these learned features as the \emph{speculative representation}.

\textbf{Candidate generation and verification.} We further equip the first future prediction head $G_1$ with candidate generation. With learned embeddings $\mathbf{emb}_m$ that are broadcast on tokens and added to noise $\mathbf{n}_{r,1}$, $G_1$ produces $M$ candidate features $\hat{\mathbf{h}}_{W+1}^{r,m}$ and velocities $\hat{\mathbf{v}}_{W+1}^{r,m}$:
\begin{equation}
\hat{\mathbf{h}}_{W+1}^{r,m} = G_1\left(\mathbf{h}_W^{r},\mathbf{n}_{r,1}+\mathbf{emb}_m\right),\quad
\hat{\mathbf{v}}_{W+1}^{r,m} = P_\theta\left(\hat{\mathbf{h}}_{W+1}^{r,m},\tau_W\right).
\label{eq:candidate_prediction}
\end{equation}
We supervise the candidate $m_r^\star$ closest to the backbone features $\mathbf{h}_W^{r+1}$ computed in the next window $\mathcal{W}_{r+1}$. Using the feature distance defined in~\eqref{eq:feature_distance}, the candidate alignment loss is
\begin{equation}
\mathcal{L}_{\mathrm{cand}} = (1-\beta)\, d_{\mathrm{feat}}\left(\hat{\mathbf{h}}_{W+1}^{r,m_r^\star},\operatorname{sg}\left[\mathbf{h}_W^{r+1}\right]\right)
+ \frac{\beta}{M-1}\sum_{m\neq m_r^\star} d_{\mathrm{feat}}\left(\hat{\mathbf{h}}_{W+1}^{r,m},\operatorname{sg}\left[\mathbf{h}_W^{r+1}\right]\right),
\label{eq:candidate_alignment}
\end{equation}
where $m_r^\star = \arg\min_{m} E_m^r$ is the candidate closest to the backbone target under the normalized feature error $E_m^r$ in~\eqref{eq:candidate_error}. Since the closest candidate receives most of the gradient, the candidate embeddings are not pulled toward a common point and can specialize to different regions of the prediction space. The small weight $\beta$ keeps every candidate activated.

We further train a lightweight multilayer perceptron (MLP) verifier $V_\phi$ to select one candidate or reject them all. Given the current features $\mathbf{h}_W^{r}$ and all candidate features $\hat{\mathbf{h}}_{W+1}^{r,m}$, $V_\phi$ predicts selection probabilities $\pi_1^r,\ldots,\pi_M^r$ and a rejection probability $\pi_{\mathrm{rej}}^r$, with gradients stopped through its inputs so that the verifier loss does not update the generator. The training label $y_r$ is $m_r^\star$, or $\mathrm{rej}$ if even this closest candidate exceeds an error threshold $\delta$ (see Appendix~\ref{app:verifier} for details). The verifier loss is
\begin{equation}
\mathcal{L}_{\mathrm{ver}} = \mathbb{E}\left[-\log \pi_{y_r}^{r}+\left(\pi_{\mathrm{rej}}^r-\mathbf{1}[y_r=\mathrm{rej}]\right)^2\right],
\label{eq:verifier_loss}
\end{equation}
where $\mathbf{1}[\cdot]$ is the indicator function. At inference, we take the most probable class and accept the corresponding candidate, or fall back to standard inference if it is $\mathrm{rej}$.

\textbf{Training loss.} We train the model in two stages. Let $\lambda_{\mathrm{feat}}$, $\lambda_{\mathrm{vel}}$, and $\lambda_{\mathrm{aux}}$ denote the loss weights. In the first stage, we combine DMD with the future alignment losses:
\begin{equation}
\mathcal{L}_{\mathrm{stage1}} = \mathcal{L}_{\mathrm{DMD}}+\lambda_{\mathrm{feat}}\mathcal{L}_{\mathrm{feat}}+\lambda_{\mathrm{vel}}\mathcal{L}_{\mathrm{vel}}.
\label{eq:stage1_objective}
\end{equation}
In the second stage, we add the candidate alignment and verifier losses. For heads with candidates, $\mathcal{L}_{\mathrm{feat}}$ and $\mathcal{L}_{\mathrm{vel}}$ are computed from candidate $m_r^\star$. The loss function is then
\begin{equation}
\mathcal{L}_{\mathrm{stage2}} = \mathcal{L}_{\mathrm{stage1}}+\lambda_{\mathrm{aux}}\left(\mathcal{L}_{\mathrm{cand}}+\mathcal{L}_{\mathrm{ver}}\right).
\label{eq:training_objective}
\end{equation}

\textbf{Speculative inference.} In standard rolling inference, each backbone evaluation advances the window by only one chunk. A new chunk enters the last position as pure noise and must be processed once by the backbone before it can move forward. Prospective Forcing lets the future prediction head take this step in advance, since the head is trained to match the output that the backbone will produce for the same chunk in a future step under the same pre-sampled noise. 

At each step, the backbone processes $\mathcal{W}_r$ while $G_1$ proposes $M$ candidates for the upcoming chunk and $V_\phi$ evaluates them. An accepted candidate is used directly as an additional clean chunk within the same backbone evaluation, similar to Medusa~\citep{cai2024medusa}; unlike Medusa, verification uses a lightweight learned verifier rather than an extra backbone pass. If all candidates are rejected, the step reduces to a standard rolling step. Either way, the leading chunks are committed, the remaining $W-1$ chunks are re-noised to their new positions, and a new chunk with its pre-sampled noise enters the last position, so accepted and rejected steps can be freely interleaved. In the worst case, the method reduces to standard autoregressive generation with negligible overhead. 

To better maintain temporal consistency during generation, the KV cache $\mathcal{C}_r$ retains the key--value states of the first chunk as global context and the most recent committed chunk, which immediately precedes the window, as local context. We update the latter whenever the window advances.

\vspace{-0.2cm}
\section{Theoretical Guarantee}
\vspace{-0.1cm}

\label{sec:theory}

The alignment losses train $\mathbf{h}^r_W$ to anticipate the features and velocities of future chunks. We now characterize what this guarantees about the learned representation in an idealized single-chunk window ($W=1$).
We analyze the video chunks $\mathbf x_t$, each decoded from its latent $\mathbf z_t$, and assume they are generated from a first-order Markov state process $\{\mathbf s_t\}$, with $\mathbf x_t$ depending on $\mathbf s_{t:t-L}$. $\mathbf{h}^r_W$ then plays the role of an estimated state $\widehat{\mathbf s}_t=\widehat{\mathbf m}(\mathbf x_{t:t-L})$, from which the heads predict the future window $\mathbf x_{t+2L:t+L}$ under pre-sampled noise, which for $L=1$ covers exactly the offsets $\{1,2\}$ used in our experiments.
Letting $\vartheta$ collect the backbone and head parameters, distribution matching loss in~\eqref{eq:dmd} makes the estimated model observationally equivalent to the distribution it matches, which plays the role of the true process here.
Future alignment enters through the selection indicator $S_\alpha(\vartheta)=\mathbf 1[\mathcal L_{\mathrm{feat}}(\vartheta)+\mathcal L_{\mathrm{vel}}(\vartheta)\le\alpha]$: exact alignment ($\alpha=0$) forces the future window to depend on the past only through $\widehat{\mathbf s}_t$ (Lemma~\ref{a:bottleneck}), which is the conditional independence our argument requires of the estimated model.
Under the following conditions, this speculative representation recovers the true state up to a smooth invertible transformation.

\begin{theorem}[Block-wise identifiability]
\label{theorem:block-identifiability}
Let $\mathcal S=\prod_{k=1}^{n}I_k$ and $\widehat{\mathcal S}=\prod_{i=1}^{n}\widehat I_i$, products of open intervals in $\mathbb R^n$, be the true and estimated state spaces, and let $d_x$ be the dimension of a chunk. Let $\mathcal U,\mathcal Y\subseteq\mathbb R^{(L+1)d_x}$ be open domains for the recovery window $\mathbf x_{t:t-L}$ and the predictive window $\mathbf x_{t+2L:t+L}$, respectively. Suppose that the true process is time homogeneous, that its states $\{\mathbf s_t\}$ form a first-order Markov process, that each $\mathbf x_t$ depends only on $\mathbf s_{t:t-L}$, and that the chunks are conditionally independent given the states. Suppose further that:
\begin{enumerate}
\item (Smooth and positive density)
All state marginal densities and the transition density are strictly positive on their respective domains.
For some $t_0$, $\mathbf x_{t_0:t_0-L}$ has a Lebesgue density positive almost everywhere on $\mathcal U$.
For every $\mathbf{s}\in\mathcal{S}$, the conditional distribution of $\mathbf x_{t+2L:t+L}$ given $\mathbf{s}_t=\mathbf{s}$ admits a jointly Borel Lebesgue density.
\label{a:regularity}

\item (Smooth finite-context recovery)
There is a surjective $C^3$ map $\mathbf m:\mathcal U\to\mathcal S$ such that $\mathbf{s}_t=\mathbf m(\mathbf{x}_{t:t-L})$ almost surely for every $t$, and its Jacobian has full row rank: $\operatorname{rank}D\mathbf m(\mathbf u)=n$ for every $\mathbf u\in\mathcal U$.
\label{a:recovery}

\item (Predictive density separation)
For any distinct states $\mathbf{s},\mathbf{s}'\in\mathcal{S}$, the set $\{\mathbf y\in\mathcal Y: p(\mathbf y|\mathbf{s})\ne p(\mathbf y|\mathbf{s}')\}$ has positive probability under the marginal distribution of $\mathbf x_{t+2L:t+L}$.
\label{a:predictive-separation}
\end{enumerate}
Suppose the estimated model $(\widehat{\mathbf m},\widehat p)$, generated by the backbone and prediction heads with $W=1$, satisfies conditions~\ref{a:regularity}--\ref{a:predictive-separation} with $(\widehat{\mathbf m},\widehat p,\widehat{\mathcal S})$ in place of $(\mathbf m,p,\mathcal S)$, is time homogeneous, is observationally equivalent to the true model, and satisfies $S_0(\vartheta)=1$. Then there is a fixed $C^3$ diffeomorphism $\varphi:\widehat{\mathcal{S}}\to\mathcal{S}$ such that $\mathbf m(\mathbf u)=\varphi(\widehat{\mathbf m}(\mathbf u))$ for every $\mathbf u\in\mathcal U$. Hence $\mathbf{s}_t=\varphi(\widehat{\mathbf{s}}_t)$ almost surely, simultaneously for all $t$.
\end{theorem}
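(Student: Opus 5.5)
The plan is to identify the estimated state by its induced predictive law. This gives a well-defined bijection between true and estimated states, and smoothness is then upgraded via the constant-rank / submersion structure of $\mathbf m$ and $\widehat{\mathbf m}$.

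\textbf{Step 1: conditional independence on both sides.} For the true model, the Markov and emission assumptions give
\[
\mathbf x_{t+2L:t+L}\perp \mathbf x_{t:t-L}\mid \mathbf s_t .
\]
The future window depends only on $\mathbf s_{t+2L:t}$, and $\mathbf s_t=\mathbf m(\mathbf x_{t:t-L})$ almost surely. Hence $p(\mathbf y\mid\mathbf u)=p(\mathbf y\mid \mathbf m(\mathbf u))$ for almost every $\mathbf u$. For the estimated model, $S_0(\vartheta)=1$ together with Lemma~\ref{a:bottleneck} gives the same bottleneck: $\widehat p(\mathbf y\mid\mathbf u)=\widehat p(\mathbf y\mid\widehat{\mathbf m}(\mathbf u))$.

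\textbf{Step 2: matching the two predictive laws.} Observational equivalence and time homogeneity make the joint law of $(\mathbf x_{t:t-L},\mathbf x_{t+2L:t+L})$ agree. Using the positive density of condition~\ref{a:regularity} on $\mathcal U$, we obtain
\[
p(\cdot\mid\mathbf m(\mathbf u))=\widehat p(\cdot\mid\widehat{\mathbf m}(\mathbf u))\quad\text{as densities, for a.e. } \mathbf u\in\mathcal U .
\]

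\textbf{Step 3: the map $\varphi$.} Define $\varphi$ on $\widehat{\mathcal S}$ by $\varphi(\widehat{\mathbf s})=\mathbf m(\mathbf u)$ for any $\mathbf u$ with $\widehat{\mathbf m}(\mathbf u)=\widehat{\mathbf s}$. This is well defined: if $\widehat{\mathbf m}(\mathbf u)=\widehat{\mathbf m}(\mathbf u')$, the matched predictive densities give $p(\cdot\mid \mathbf m(\mathbf u))=p(\cdot\mid\mathbf m(\mathbf u'))$ almost everywhere, and predictive separation (condition~\ref{a:predictive-separation}) then forces $\mathbf m(\mathbf u)=\mathbf m(\mathbf u')$. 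Symmetrically, separation for $\widehat p$ shows that $\mathbf m(\mathbf u)=\mathbf m(\mathbf u')$ implies $\widehat{\mathbf m}(\mathbf u)=\widehat{\mathbf m}(\mathbf u')$. So the level sets of $\mathbf m$ and $\widehat{\mathbf m}$ coincide, and $\varphi$ is a bijection $\widehat{\mathcal S}\to\mathcal S$; surjectivity of both maps supplies both directions.

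\textbf{Main obstacle: passing from ``almost every $\mathbf u$'' to ``every $\mathbf u$''.} I would first obtain $\mathbf m=\varphi\circ\widehat{\mathbf m}$ on a full-measure set. I would then extend it using continuity of $\mathbf m$ and $\widehat{\mathbf m}$ together with the submersion property. A $C^3$ submersion is an open map, and its fibers are $C^3$ submanifolds of codimension $n$. Two continuous maps that induce the same partition on a dense set should induce the same partition everywhere, provided $\varphi$ is continuous on the dense image.

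To get continuity of $\varphi$, I would use local $C^3$ sections. Because $\widehat{\mathbf m}$ is a submersion, each $\widehat{\mathbf s}$ admits a local $C^3$ right inverse $\sigma$ with $\widehat{\mathbf m}\circ\sigma=\mathrm{id}$. This gives $\varphi=\mathbf m\circ\sigma$ locally on the good set. It is likely the delicate point to argue that $\sigma$ can be chosen with image in the full-measure set, or to replace the pointwise identity by an identity of continuous functions. My first attempt would be to establish the fiber identity through the integrated predictive densities, which depend continuously on $\mathbf u$, and then pass to limits.

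Once $\mathbf m=\varphi\circ\widehat{\mathbf m}$ holds everywhere, $\varphi=\mathbf m\circ\sigma$ is $C^3$ locally. By the symmetric argument with $\mathbf m$'s local sections, $\varphi^{-1}$ is also $C^3$, so $\varphi$ is a $C^3$ diffeomorphism. The Jacobian check is consistent: the chain rule gives $D\mathbf m=D\varphi\,D\widehat{\mathbf m}$, so full rank of $D\mathbf m$ forces $D\varphi$ to be invertible. Finally, $\mathbf s_t=\mathbf m(\mathbf x_{t:t-L})=\varphi(\widehat{\mathbf m}(\mathbf x_{t:t-L}))=\varphi(\widehat{\mathbf s}_t)$ almost surely. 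Time homogeneity means the same $\varphi$ serves for all $t$, giving the conclusion simultaneously over countably many $t$.
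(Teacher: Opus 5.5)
Your Steps 1 and 2 match the paper. The proof then stalls at exactly the point you flag, and that point carries the whole smoothness argument. The identity $p(\cdot\mid\mathbf m(\mathbf u))=\widehat p(\cdot\mid\widehat{\mathbf m}(\mathbf u))$ holds only on a full-measure set $G\subseteq\mathcal U$. Every fiber $\widehat{\mathbf m}^{-1}(\widehat{\mathbf s})$ is a codimension-$n$ submanifold, hence Lebesgue-null, and may miss $G$ entirely. So the well-definedness and bijectivity in Step~3 are justified only for pairs $\mathbf u,\mathbf u'\in G$. At that stage $\varphi$ is defined only on $\widehat{\mathbf m}(G)$, with no continuity, and neither $\mathbf m=\varphi\circ\widehat{\mathbf m}$ everywhere nor the coincidence of level sets follows.

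The remedies you sketch do not close this gap. A single local section $\sigma$ has null image, so in general it cannot be chosen inside $G$. The integrated probabilities $\mathbf u\mapsto\int_A p(\mathbf y\mid\mathbf m(\mathbf u))\,d\mathbf y$ are continuous only if $\mathbf s\mapsto p(\cdot\mid\mathbf s)$ is, and that is not assumed: the density condition gives only a jointly Borel density. Your dense-set remark needs a continuous extension of $\varphi$ to all of $\widehat{\mathcal S}$, which is precisely what is unproved.

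The missing idea is to use the whole family of sections supplied by submersion coordinates, together with Fubini. Take $C^3$ coordinates $\kappa:O_1\times O_2\to\mathcal U$ with $\widehat{\mathbf m}(\kappa(a,b))=a$. Because $\kappa$ preserves null sets, $\kappa^{-1}(G)$ has full measure in $O_1\times O_2$. By Fubini, for almost every $a\in O_1$ the slice $\{b:\kappa(a,b)\in G\}$ then has full measure, hence is dense, in $O_2$. For such $a$, your Step~3 comparison (valid on $G$) makes $b\mapsto\mathbf m(\kappa(a,b))$ constant on a dense set, hence constant by continuity.

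Then $\varphi_{O_1}(a)=\mathbf m(\kappa(a,b_0))$ is $C^3$, and continuity in $a$ extends $\mathbf m(\kappa(a,b))=\varphi_{O_1}(a)$ to all $(a,b)$. Two such local maps agree at almost every point of an overlap (compare points of $G$ on a common fiber), hence everywhere. By surjectivity of $\widehat{\mathbf m}$ they patch into a global $C^3$ map $\varphi$ with $\mathbf m=\varphi\circ\widehat{\mathbf m}$ on $\mathcal U$. The symmetric construction gives $\boldsymbol\rho$ with $\widehat{\mathbf m}=\boldsymbol\rho\circ\mathbf m$, and surjectivity of both recovery maps makes $\varphi$ and $\boldsymbol\rho$ mutually inverse.

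The paper runs this Fubini argument on $\mathbf m(\kappa(a,b))=\varphi_0(a)$. There $\varphi_0$ is built from injective Borel predictive-law maps and made Borel via Lusin--Souslin so that Fubini applies. Your pairwise comparison on $G$ would let you skip that measurability step, since only $G$ must be measurable, but only once the Fubini step is supplied.
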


\paragraph{Discussion on assumptions.}
The assumptions combine identification conditions without imposing parametric constraints.
Smoothness and positivity provide regularity for the proof.
Predictive density separation parallels Assumption~4 of~\citep{hu2008instrumental}: distinct states must induce distinct future distributions.
Following~\citep{chen2024caring}, we assume that the state can be recovered from a finite observation window; smoothness and full rank of the recovery maps support identification through a smooth invertible transformation.
Two remarks delimit the scope of this result. The true process is the distribution matched by the training objective; under DMD this is the teacher's distribution rather than the data distribution. Moreover, the regularity conditions on the estimated model are assumed rather than derived, the selection is exact ($\alpha=0$), and the analysis covers $W=1$ without the KV-cache anchor. We do not formally relate identifiability to drift. However, future alignment consistently reduces drift and improves video quality empirically.

\vspace{-0.2cm}
\section{Experiments}

\vspace{-0.2cm}

\subsection{Setup}
\vspace{-0.1cm}

\textbf{Implementation.}
We build on Wan2.1-T2V-1.3B~\citep{wan2025wan}, initialized from the causal consistency distillation checkpoint of Causal Forcing++~\citep{zhao2026causal} and distilled from Wan2.1-T2V-14B, with $W=5$, $K=2$, and $M=2$. Training details are given in Appendix~\ref{app:implementation}.

\textbf{Evaluation.}
Following \citet{liu2025rolling}, we generate 30-second videos for the same 200 prompts from Movie Gen Video Bench~\citep{polyak2024movie} and report VBench~\citep{huang2024vbench}, VBench-Long~\citep{huang2025vbenchpp}, VideoScore~\citep{he2024videoscore}, MUSIQ~\citep{ke2021musiq} drift, and FPS. Please refer to Appendices~\ref{app:metrics} and~\ref{app:efficiency} for more details. We compare against six autoregressive generators built on the same base model: CausVid~\citep{yin2025causvid}, Self Forcing~\citep{huang2025selfforcingbridgingtraintest}, Causal Forcing++~\citep{zhao2026causal}, LongLive~\citep{yang2026longlive}, Rolling Forcing~\citep{liu2025rolling}, and Video-Mirai~\citep{yu2026videomirai}. We use the released checkpoint for Video-Mirai and retrain the others with their official code.

\vspace{-0.1cm}
\subsection{Main Results}
\vspace{-0.1cm}
\label{sec:main_results}

\begin{wrapfigure}[16]{r}{0.5\textwidth}
\centering
\vspace{-0.6cm}
\includegraphics[width=0.5\textwidth]{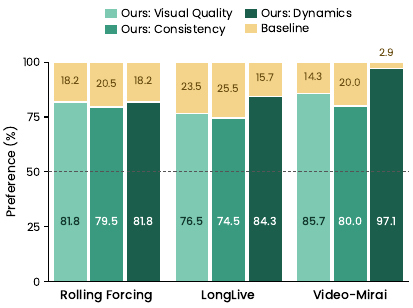}
\vspace{-0.7cm}
  \caption{\textbf{User study.} Bars show the win rate of pairwise comparisons of human preference.}
\label{fig:user_study}
\end{wrapfigure}
Figure~\ref{fig:qualitative_prompt12} illustrates the advantage of Prospective Forcing. Several baselines lose visual fidelity over time, such as the overexposed close-up of CausVid and the color corruption of Video-Mirai, or fail to keep the subject in view, as in Rolling Forcing. Others keep the car intact but gradually become static: in Self Forcing and LongLive, the car stays at almost the same position and pose relative to the camera, as if driving in place. In contrast, our method keeps the same car throughout the 30 seconds while it actually travels along the track and through the turn, with the viewpoint and surroundings changing accordingly. It demonstrates that Prospective Forcing generates long videos that are visually faithful, temporally consistent, and rich in motion. Human evaluation reflects the same trend. In our user study as shown in Figure~\ref{fig:user_study}, participants prefer our method over Rolling Forcing, LongLive, and Video-Mirai in visual quality, long-term consistency, and dynamics. Quantitatively, our method achieves the highest VBench Total and Quality, VBench-Long Quality, and MUSIQ scores in Table~\ref{tab:main}. LongLive attains slightly better drift and VideoScore, but with much less motion (Dynamic Degree 62.0 vs.\ 100.0, Table~\ref{tab:app_vbench_quality}); since static videos inflate consistency metrics, our method achieves a better balance between motion and consistency, as the user study also confirms.

\begin{figure*}[t]
\centering
\vspace{-0.3cm}
\includegraphics[width=0.9\textwidth]{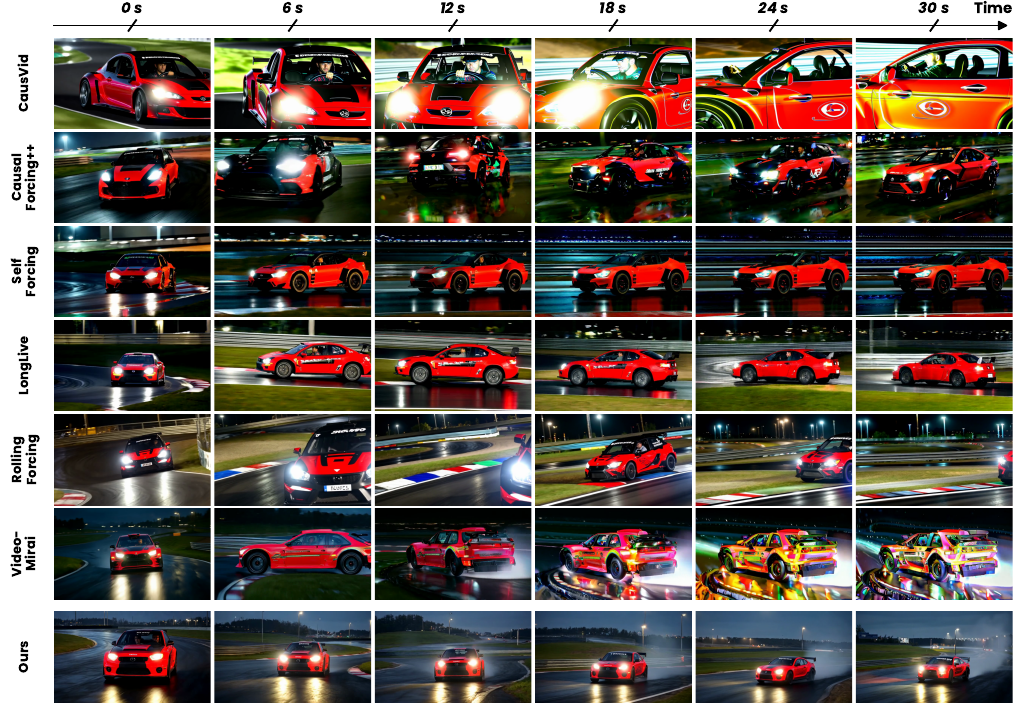}
\vspace{-0.2cm}
\caption{Qualitative comparison on a 30-second video generated from the prompt ``A rally car taking a fast turn on a track''. Video comparison is provided in the supplementary materials.}
\label{fig:qualitative_prompt12}
\end{figure*}

\begin{table*}[t]
\centering
\vspace{-0.3cm}
\caption{Quantitative comparison on 200 MovieGen prompts (30-second videos, $832\times480$, 16\,fps). \textbf{Bold} and \underline{underline} denote the best and second-best results.}
\vspace{0.1cm}
\label{tab:main}
\setlength{\tabcolsep}{6pt}
\renewcommand{\arraystretch}{1.12}
\resizebox{\textwidth}{!}{%
\begin{tabular}{l c ccc c c cc}
\toprule
\multirow{2}{*}{Method} & \multirow{2}{*}{FPS$\uparrow$} & \multicolumn{3}{c}{VBench} & VBench-Long & VideoScore & \multicolumn{2}{c}{MUSIQ} \\
\cmidrule(lr){3-5}\cmidrule(lr){6-6}\cmidrule(lr){7-7}\cmidrule(lr){8-9}
 & & Total$\uparrow$ & Quality$\uparrow$ & Semantic$\uparrow$ & Quality$\uparrow$ & Avg.$\uparrow$ & Avg.$\uparrow$ & Drift$\downarrow$ \\
\midrule
CausVid & \textbf{9.28} & 76.22 & 77.02 & 73.00 & 77.43 & 2.637 & 66.49 & 9.02 \\
Causal Forcing++ & 7.56 & 78.56 & 79.14 & 76.24 & 79.53 & 2.670 & 66.65 & 5.52 \\
Self Forcing & 7.79 & 78.27 & 78.76 & \underline{76.33} & 78.24 & 2.905 & 68.84 & 6.18 \\
LongLive & 8.08 & \underline{80.27} & \underline{80.78} & \textbf{78.22} & 80.61 & \textbf{3.108} & \underline{69.12} & \textbf{3.43} \\
Rolling Forcing & 7.95 & 78.63 & 80.05 & 72.94 & \underline{82.20} & 2.775 & 68.92 & 5.32 \\
Video-Mirai & 8.13 & 74.98 & 75.14 & 74.32 & 76.29 & 2.704 & 64.54 & 11.90 \\
\midrule
\textbf{Ours} & \underline{9.23} & \textbf{81.40} & \textbf{82.71} & 76.15 & \textbf{83.71} & \underline{3.069} & \textbf{69.14} & \underline{3.64} \\
\bottomrule
\end{tabular}}
\vspace{-0.3cm}
\end{table*}

\subsection{Ablation Study}

\textbf{Training and inference.}
Table~\ref{tab:abl_inference} compares inference modes under the two training stages. Even with standard inference ($\ell=0$), the stage-1 model already outperforms most baselines across multiple metrics and is preferred over them in most user study comparisons as shown in Table~\ref{tab:app_user_study_wo}. The learned representation also supports accelerated inference. Committing the predictions of $G_1,\dots,G_\ell$ without verification speeds up sampling by $1.56\times$ and $1.89\times$ for $\ell=1$ and $\ell=2$, while quality is largely preserved. In exchange, drift grows with $\ell$. Stage 2 further introduces candidate generation and the learned verifier. The resulting model achieves the highest quality, and its drift stays close to that of standard inference while still running $1.20\times$ faster. To disentangle stage-2 training from verification, we also evaluate the stage-2 model with standard and unverified inference. Both perform close to their stage-1 counterparts, indicating that stage 2 alone brings little change. With verification, the same model achieves higher quality than both settings while using fewer backbone executions than $\ell=0$, so the quality gain comes from accepting predictions selectively rather than from additional computation. The speedup is smaller than the fraction of skipped windows would suggest, since each committed chunk still requires a cache-update pass that cannot be skipped. Please refer to Appendix~\ref{app:efficiency} for more details.

\begin{wraptable}{r}{0.45\textwidth}
\vspace{-18pt}
\centering
\caption{Ablation on prediction horizons.}
\vspace{0.1cm}
\label{tab:abl_horizon}
\small
\setlength{\tabcolsep}{6pt}
\begin{tabular}{lcc}
\toprule
Horizons & Quality$\uparrow$ & Drift$\downarrow$ \\
\midrule
$\{1\}$ & 81.00 & 4.08 \\
$\{1,2\}$ (Ours) & \textbf{81.29} & 3.29 \\
$\{1,2,4\}$ & 80.21 & \textbf{2.89} \\
\bottomrule
\end{tabular}
\vspace{-8pt}
\end{wraptable}

\textbf{Prediction horizon.}
Table~\ref{tab:abl_horizon} varies the set of future offsets predicted by the heads. For example, $\{1, 2\}$ means that $G_1$ and $G_2$ predict the first and second chunks after the window. All models are initialized with causal consistency distillation, trained with stage 1 only, and evaluated with standard inference, so the heads do not participate in generation and any difference comes from the learned representation. Compared with predicting only the next chunk, adding a second head improves both quality and drift. This suggests that predicting further ahead strengthens the speculative representation and mitigates temporal shortsightedness. A longer horizon is not always better, however. Extending the prediction to the fourth chunk yields the lowest drift but degrades quality, likely because distant chunks are harder to predict and provide noisier supervision. We therefore use $K=2$ heads with offsets $\{1, 2\}$, which gives the best quality with low drift and also supports the accelerated inference with $\ell \le 2$ in Table~\ref{tab:abl_inference}.

\begin{table}[t]
\centering
\vspace{-0.5cm}
\caption{\textbf{Ablation on inference mode.} Results of the model trained with stage 1 only and with both stages, under different inference modes.}
\vspace{0.1cm}
\label{tab:abl_inference}
\small
\setlength{\tabcolsep}{5pt}
\begin{tabular}{llccc}
\toprule
Train & Inference & FPS$\uparrow$ & Quality$\uparrow$ & Drift$\downarrow$ \\
\midrule
\multirow{3}{*}{Stage 1} & $\ell=0$ & 7.69 & 81.29 & \textbf{3.29} \\
 & Unverified, $\ell=1$ & 12.03 & 81.80 & 4.22 \\
 & Unverified, $\ell=2$ & \textbf{14.51} & 81.16 & 4.51 \\
\midrule
 & $\ell=0$ & 7.81 & 81.63 & 3.42 \\
{Stage 1 + 2} & Unverified, $\ell=1$ & 11.95 & 81.98 & 4.14 \\
 & Verified, $\ell=1$ (Ours) & 9.23 & \textbf{82.71} & 3.64 \\
\bottomrule
\end{tabular}
\vspace{-0.1cm}
\end{table}

\begin{table}[t]
\centering
\caption{\textbf{Ablation on design choices under stage $1$ and standard inference.} Each variant changes one component of the ODE-initialized model (row ``ODE'').}
\vspace{0.1cm}
\label{tab:abl_structure}
\small
\setlength{\tabcolsep}{5pt}
\begin{tabular}{llccccc}
\toprule
Factor & Setting & Quality$\uparrow$ & Subject$\uparrow$ & Dynamic$\uparrow$ & Imaging$\uparrow$ & Drift$\downarrow$ \\
\midrule
\multirow{2}{*}{Architecture}
 & Shared recurrent head & 77.14 & 84.72 & \textbf{100.0} & 63.47 & 5.10 \\
 & Sequential heads & 78.37 & 85.30 & \textbf{100.0} & 65.01 & 4.16 \\
\midrule
\multirow{2}{*}{Head input} & 3 chunks & 79.58 & 85.48 & \textbf{100.0} & 65.74 & 4.04 \\
 & 5 chunks & 77.25 & 84.41 & \textbf{100.0} & 64.13 & 4.61 \\
\midrule
\multirow{2}{*}{Head output} & 3 chunks & 80.43 & 94.23 & 17.0 & \textbf{71.40} & 2.58 \\
 & 5 chunks & 80.87 & \textbf{95.28} & 23.5 & 70.88 & \textbf{2.05} \\
\midrule
\multirow{2}{*}{Loss} & w/o $\mathcal{L}_{\mathrm{feat}}, \mathcal{L}_{\mathrm{vel}}$ & 80.05 & 84.36 & 98.5 & 68.24 & 5.32 \\
 & w/o $\mathcal{L}_{\mathrm{feat}}$ & 77.94 & 85.17 & \textbf{100.0} & 64.30 & 4.55 \\
\midrule
\multirow{2}{*}{Initialization} & ODE & 80.64 & 87.01 & \textbf{100.0} & 65.99 & 3.28 \\
 & Causal CD (Ours) & \textbf{81.29} & 88.05 & \textbf{100.0} & 68.12 & 3.29 \\
\bottomrule
\end{tabular}
\vspace{-0.3cm}
\end{table}

\textbf{Design choices.}
Table~\ref{tab:abl_structure} ablates our design choices under the same stage-1, standard-inference setting, where each variant changes one component of the ODE-initialized model. We report the VBench Quality score together with three of its dimensions (subject consistency, dynamic degree, and imaging quality), and the MUSIQ drift. We first examine the architecture of the prediction heads. Sharing one head recursively across offsets, or using separate heads that each take the previous prediction as input, both underperform our parallel heads. We then vary what the heads read and predict. Feeding them more past chunks makes the prediction harder to learn and lowers quality, whereas supervising more positions of the future window raises the scores but collapses the dynamic degree and produces nearly static videos. Mapping the current chunk to a single future chunk therefore works best. 
For the loss, training the same ODE-initialized model with DMD only, which is equivalent to Rolling Forcing, yields lower quality and substantially higher drift, showing that the gain comes from future alignment rather than initialization.
Removing the feature alignment loss degrades performance even below this DMD-only model, which indicates that learning the representation itself is essential. Finally, causal consistency distillation~\citep{zhao2026causal} further improves over ODE~\citep{yin2025causvid}, showing that our method is complementary to a stronger initialization.

\vspace{-0.3cm}

\section{Conclusion}
\vspace{-0.3cm}

We introduced Prospective Forcing to address temporal shortsightedness in autoregressive long video diffusion. Rather than supervising only the next chunk, it trains lightweight heads to anticipate chunks at multiple future offsets, distilling targets from the backbone's own later predictions. This turns the shared history into a speculative representation that looks beyond the immediate future. The same heads further enable verifier-guided speculative inference, which skips backbone executions whenever a predicted chunk is accepted and otherwise falls back to standard generation. Prospective Forcing outperforms baselines across multiple quantitative metrics and is preferred in human evaluations, while also achieving faster sampling. \textbf{Limitations.} Although Prospective Forcing accelerates sampling, the inference process still requires additional cache-update passes, which limits the speedup. We leave reducing this overhead to future work.

\newpage

\section*{AI Use Statement}
We used generative AI tools to assist with editing the manuscript for grammar and readability, with writing and modifying parts of the experimental code, and with parts of the theoretical derivations. All AI-assisted text, code, and derivations were carefully reviewed, tested, and verified by the authors, who take full responsibility for the content of this paper.

\section*{Reproducibility Statement}
Anonymized source code is provided in the supplementary material.
Implementation details and evaluation protocols are
given in Appendices~\ref{app:implementation} and \ref{app:metrics}, respectively.

\section*{Ethics Statement}
Our user study involved voluntary participants who compared generated videos. No personal data was collected during the user study.

\bibliography{refs}
\bibliographystyle{iclr2027_conference}

\appendix
\newpage

\section{Proof of Theorem~\ref{theorem:block-identifiability}}
\label{app:proof}

We first show how future alignment yields the conditional independence required of the estimated model, and then prove the case $L=2$, for which the recovery window is $\mathbf x_{t:t-2}$ and the predictive window is $\mathbf x_{t+4:t+2}$. The extension to arbitrary finite $L\ge1$ is given at the end of this section.

\begin{lemma}[Future alignment induces a predictive bottleneck]
\label{a:bottleneck}
Consider the estimated model with $W=1$, in which the noise of each chunk is drawn independently of the past, each chunk is a measurable function of its backbone velocity and its noise (through the denoising update and the VAE decoder), and the prediction heads map $\widehat{\mathbf s}_t$ and this noise to the features and velocities of the chunks in $\mathbf x_{t+2L:t+L}$. If $S_0(\vartheta)=1$, then $\mathbf x_{t+2L:t+L}\mathrel{\perp\!\!\!\perp}\mathbf x_{t:t-L}\mid\widehat{\mathbf s}_t$ in the estimated model.
\end{lemma}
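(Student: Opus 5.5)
The argument rests on two facts: exact alignment makes the backbone's outputs for future chunks agree almost surely with the head outputs, and the head outputs depend on the past only through $\widehat{\mathbf s}_t$ and on fresh noise. First, $S_0(\vartheta)=1$ means $\mathcal L_{\mathrm{feat}}+\mathcal L_{\mathrm{vel}}=0$. Both are expectations of nonnegative quantities, so each summand vanishes almost surely. From $\mathcal L_{\mathrm{vel}}=0$ the MSE term gives $\hat{\mathbf v}_{W+k}^r=\mathbf v_W^{r+k}$ a.s. for every offset $k$ in the predictive window. Hence the velocity the backbone actually uses to produce chunk $t+k$ (with $W=1$ this is $\mathbf v_W^{r+k}$) equals the head prediction $P_\theta(G_k(\widehat{\mathbf s}_t,\mathbf n_{t,k}),\tau_W)$ almost surely. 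Only the velocity equality is needed; the feature loss is not required for this lemma.

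Second, each chunk $\mathbf x_{t+k}$, $k=L,\dots,2L$, is a measurable function $g$ of its backbone velocity and its own pre-sampled noise $\boldsymbol\epsilon_{t+k}$. Combining this with the first step,
\begin{equation*}
\mathbf x_{t+2L:t+L}=\Psi\bigl(\widehat{\mathbf s}_t,\boldsymbol\epsilon_{t+L:t+2L}\bigr)\quad\text{a.s.},
\end{equation*}
where $\Psi$ is measurable, being a composition of $G_k$, $P_\theta$, the patch embedding and $g$. The crucial point is that the head's noise tokens $\mathbf n_{t,k}$ are embeddings of the same $\boldsymbol\epsilon_{t+k}$ that is later reused when the chunk enters the window. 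So a single noise vector appears on both sides, and the equality is a pathwise identity rather than merely a distributional one.

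Third, the noise $\boldsymbol\epsilon_{t+L:t+2L}$ is drawn independently of the past, so it is independent of $(\mathbf x_{t:t-L},\widehat{\mathbf s}_t)$; here $\widehat{\mathbf s}_t=\widehat{\mathbf m}(\mathbf x_{t:t-L})$ is itself a function of the past window. A standard lemma then applies: if $Y=\Psi(S,E)$ with $E\perp\!\!\!\perp(X,S)$, then $Y\perp\!\!\!\perp X\mid S$. To verify it, for bounded measurable $f$ compute $\mathbb E[f(Y)\mid X,S]=\int f(\Psi(S,e))\,P_E(de)$, which is a function of $S$ alone. The almost-sure qualifier is harmless because modifying $Y$ on a null set does not change conditional laws. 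This gives $\mathbf x_{t+2L:t+L}\perp\!\!\!\perp\mathbf x_{t:t-L}\mid\widehat{\mathbf s}_t$.

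The main obstacle is the first step, and specifically the bookkeeping of what the velocity target is. The backbone's $\mathbf v_W^{r+k}$ is computed in a later window, so it depends on the intermediate chunks and the context, not only on $\widehat{\mathbf s}_t$. Conditional independence only follows because the a.s. equality replaces this history-dependent quantity by a $\widehat{\mathbf s}_t$-measurable one. I would therefore state carefully that the expectation in $\mathcal L_{\mathrm{vel}}$ runs over the estimated model's own rollouts, so that "a.s." refers to the same probability space in which the conditional independence is asserted. I would also check that with $W=1$ there is no other path, such as KV-cache anchors (excluded here) or extra noise in the denoising update, through which the past can enter $\mathbf x_{t+k}$ except via the velocity and $\boldsymbol\epsilon_{t+k}$. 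That is guaranteed by the lemma's hypothesis that each chunk is a measurable function of exactly these two.
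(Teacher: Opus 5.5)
Your proposal is correct and follows essentially the same route as the paper. Vanishing population alignment losses give an almost-sure pathwise equality between head and backbone outputs under the shared pre-sampled noise, so $\mathbf x_{t+2L:t+L}$ is a measurable function of $\widehat{\mathbf s}_t$ and the independent noise $\boldsymbol\epsilon_{t+L:t+2L}$, and the conditional independence follows. Your explicit verification via $\mathbb E[f(Y)\mid X,S]=\int f(\Psi(S,e))\,P_E(de)$, and your remark that only the velocity loss is needed under the stated hypothesis, are minor refinements of that same argument.
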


\begin{proof}
$S_0(\vartheta)=1$ means that the population alignment losses vanish, so almost surely the head outputs coincide with the backbone features and velocities computed when the corresponding chunks enter the window under the same noise realization. Hence $\mathbf x_{t+2L:t+L}=g(\widehat{\mathbf s}_t,\boldsymbol\epsilon_{t+L:t+2L})$ almost surely for a measurable map $g$. The noise $\boldsymbol\epsilon_{t+L:t+2L}$ is independent of $\mathbf x_{t:t-L}$, and $\widehat{\mathbf s}_t=\widehat{\mathbf m}(\mathbf x_{t:t-L})$ is a function of the recovery window, so the conditional law of $\mathbf x_{t+2L:t+L}$ given $\mathbf x_{t:t-L}$ depends on it only through $\widehat{\mathbf s}_t$.
\end{proof}

\begin{proof}[Proof of Theorem~\ref{theorem:block-identifiability}]
Fix $t$. The latent variables entering $\mathbf{x}_{t:t-2}$ belong to
$\mathbf{s}_{t:t-4}$, whereas those entering $\mathbf{x}_{t+4:t+2}$ belong to $\mathbf{s}_{t+4:t}$. By first-order Markovity of $\{\mathbf{s}_t\}$, the latent blocks $\mathbf{s}_{t-1:t-4}$ and $\mathbf{s}_{t+4:t+1}$ are conditionally independent given $\mathbf{s}_t$. Together with the conditional independence of the chunks given the states, this gives
\begin{equation}
 \mathbf{x}_{t+4:t+2}\mathrel{\perp\!\!\!\perp}\mathbf{x}_{t:t-2}|\mathbf{s}_t.
 \label{a:cut}
\end{equation}
The future window starts at $t+2$ because $\mathbf{x}_{t+1}$ may still depend on $\mathbf{s}_{t-1}$.
In the estimated model, Lemma~\ref{a:bottleneck} gives
$\mathbf{x}_{t+4:t+2}\mathrel{\perp\!\!\!\perp}\mathbf{x}_{t:t-2}|\widehat{\mathbf{s}}_t$; this is the only step that uses the selection $S_0(\vartheta)=1$.

Enumerate the lower rectangles in $\mathbb R^{3d_x}$ with rational endpoints,
intersected with $\mathcal Y$, as $(A_j)_{j\ge1}$. Their probabilities
determine a distribution on $\mathcal Y$. Define
\begin{equation}
 \Phi(\mathbf{s})=\left(\int_{A_j}p(\mathbf y|\mathbf{s})\,d\mathbf y\right)_{j\ge1},
 \qquad
 \widehat\Phi(\widehat{\mathbf{s}})=
 \left(\int_{A_j}\widehat p(\mathbf y|\widehat{\mathbf{s}})\,d\mathbf y\right)_{j\ge1}.
 \label{a:prediction-sequences}
\end{equation}
These are Borel maps into $[0,1]^{\mathbb N}$. They are independent of
$t$ by time homogeneity, and injective by condition~\ref{a:predictive-separation} and its counterpart for the estimated model. They record the complete predictive
distributions; no differentiability of $\Phi$ or $\widehat\Phi$ is used.

For each $j$, the tower property, Equ.~(\ref{a:cut}), and exact recovery give
\begin{equation}
 \begin{aligned}
 \mathbb{P}(\mathbf{x}_{t+4:t+2}\in A_j|\mathbf{x}_{t:t-2})
 &=\mathbb E\left[
   \mathbb{P}(\mathbf{x}_{t+4:t+2}\in A_j|\mathbf{s}_t,\mathbf{x}_{t:t-2})
   \,\middle|\,\mathbf{x}_{t:t-2}\right]\\
 &=\mathbb E[\Phi_j(\mathbf{s}_t)|\mathbf{x}_{t:t-2}]\\
 &=\Phi_j(\mathbf m(\mathbf{x}_{t:t-2}))
 \quad\text{almost surely}.
 \end{aligned}
 \label{a:conditional-prediction}
\end{equation}
The estimated model gives the corresponding expression
$\widehat\Phi_j(\widehat{\mathbf m}(\mathbf{x}_{t:t-2}))$.
Observational equivalence makes the left-hand sides equal under the
common observation law. Countability of the event collection gives
\begin{equation}
 \Phi(\mathbf m(\mathbf{x}_{t:t-2}))
 =\widehat\Phi(\widehat{\mathbf m}(\mathbf{x}_{t:t-2}))
 \quad\text{almost surely}.
 \label{a:prediction-equality}
\end{equation}
Equivalently, for almost every recovery window, the predictive densities satisfy
\begin{equation}
 p(\mathbf y|\mathbf m(\mathbf{x}_{t:t-2}))
 =\widehat p(\mathbf y|\widehat{\mathbf m}(\mathbf{x}_{t:t-2}))
 \quad\text{for Lebesgue-almost every }\mathbf y\in\mathcal Y.
 \label{a:predictive-density-equality}
\end{equation}

By the Lusin--Souslin theorem, an injective Borel map between standard Borel spaces has a Borel image and a Borel inverse on that image~\citep{kechris2012classical}. Hence $\mathcal{P}=\Phi(\mathcal{S})\cap\widehat\Phi(\widehat{\mathcal{S}})$, $\mathcal{S}_0=\Phi^{-1}(\mathcal{P})$ and $\widehat{\mathcal{S}}_0=\widehat\Phi^{-1}(\mathcal{P})$ are Borel sets. Equ.~(\ref{a:prediction-equality}) shows that $\mathcal{S}_0$ and $\widehat{\mathcal{S}}_0$ have probability one for the respective states at every time. On these sets, the maps $\varphi_0=\Phi^{-1}\circ\widehat\Phi:\widehat{\mathcal{S}}_0\to\mathcal{S}_0$ and $\varphi_0^{-1}=\widehat\Phi^{-1}\circ\Phi$ are mutually inverse Borel maps. In particular, $\mathbf m(\mathbf{x}_{t:t-2})=\varphi_0(\widehat{\mathbf m}(\mathbf{x}_{t:t-2}))$ almost surely. Equality of the full prediction ranges is not needed.

We next obtain a smooth correspondence using the recovery maps. Extend
$\varphi_0$ and $\varphi_0^{-1}$ arbitrarily as Borel maps to the
remaining states. These extensions need not be inverses there.
At time $t_0$ in condition~\ref{a:regularity}, positivity of the window density makes the almost-sure relation above valid for Lebesgue-almost every $\mathbf u\in\mathcal U$:
\begin{equation}
 \mathbf m(\mathbf u)=\varphi_0(\widehat{\mathbf m}(\mathbf u)).
 \label{a:measurable-recovery-factorization}
\end{equation}

By the counterpart of condition~\ref{a:recovery} for the estimated model, $\widehat{\mathbf m}$ is $C^3$ with full row rank. The local submersion theorem for $C^3$ maps gives, near every window, local $C^3$ coordinates $\kappa:O_1\times O_2\to\mathcal U$, with $O_1\subseteq\widehat{\mathcal{S}}$ and $O_2\subseteq\mathbb R^{3d_x-n}$ open, such that $\widehat{\mathbf m}(\kappa(a,b))=a$, see~\citep{abraham2012manifolds}. The coordinate change preserves null sets, so Equ.~(\ref{a:measurable-recovery-factorization}) becomes $\mathbf m(\kappa(a,b))=\varphi_0(a)$ for Lebesgue-almost every $(a,b)$.
Fubini's theorem gives this equality for almost every $b$ at almost every $a$. For each such $a$, continuity in $b$ extends it to all $b$.
Fix $b_0\in O_2$ and define $\varphi_{O_1}(a)=\mathbf m(\kappa(a,b_0))$. This map is $C^3$ and satisfies $\mathbf m(\kappa(a,b))=\varphi_{O_1}(a)$ for almost every $a$ and all $b$. Continuity in $a$ then extends this identity to all $(a,b)\in O_1\times O_2$.

On overlapping state neighborhoods, these local maps agree almost everywhere with $\varphi_0$, so continuity makes them agree at every point. Surjectivity of $\widehat{\mathbf m}$ ensures that the neighborhoods cover $\widehat{\mathcal{S}}$. Thus they define a global $C^3$ map $\varphi$ satisfying $\mathbf m(\mathbf u)=\varphi(\widehat{\mathbf m}(\mathbf u))$ for every $\mathbf u\in\mathcal U$.
Applying the same argument to the reverse almost-sure relation gives a $C^3$ map $\boldsymbol\rho:\mathcal{S}\to\widehat{\mathcal{S}}$ such that $\widehat{\mathbf m}=\boldsymbol\rho\circ\mathbf m$ on $\mathcal U$.
Surjectivity of both recovery maps implies $\boldsymbol\rho\circ\varphi=\operatorname{id}_{\widehat{\mathcal{S}}}$ and $\varphi\circ\boldsymbol\rho=\operatorname{id}_{\mathcal{S}}$.
Consequently, $\varphi$ is a $C^3$ diffeomorphism. It is unique because $\widehat{\mathbf m}$ is onto. The recovery maps and the identity on $\mathcal U$ are independent of time and history.
Since every recovery window belongs to $\mathcal U$ almost surely and $\mathbb Z$ is countable, the recovered states satisfy the same identity simultaneously at all times on an event of probability one.
\end{proof}

\begin{lemma}[Global disentanglement on product domains]
\label{a:global}
Let $\varphi:\widehat{\mathcal{S}}\to\mathcal{S}$ be a $C^3$ diffeomorphism between the products of open intervals $\widehat{\mathcal S}=\prod_{i=1}^n\widehat I_i$ and $\mathcal S=\prod_{k=1}^n I_k$. If its Jacobian has at most one nonzero entry in each row at every point, then $\varphi$ and $\varphi^{-1}$ are fixed permutations composed with fixed one-dimensional $C^3$ diffeomorphisms.
\end{lemma}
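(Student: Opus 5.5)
The plan is to show that $\varphi$ acts coordinatewise up to a single global relabeling of coordinates. First, invertibility of the Jacobian pins the sparsity pattern down to a permutation matrix at every point. Second, connectedness of $\widehat{\mathcal S}$ makes that permutation constant. Third, the product structure of the domain turns the pointwise statement into a global one.

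\textbf{Step 1: the pattern is a permutation matrix at every point.} Since $\varphi$ is a diffeomorphism, $D\varphi(\widehat{\mathbf s})$ is invertible for every $\widehat{\mathbf s}$. Each row then has at least one nonzero entry, and by hypothesis at most one, so exactly one. Write $\sigma(\widehat{\mathbf s})(k)$ for the column of the nonzero entry in row $k$. If two rows used the same column, some other column would be identically zero, contradicting invertibility. Hence $\sigma(\widehat{\mathbf s})$ is a permutation of $\{1,\ldots,n\}$, and $D\varphi(\widehat{\mathbf s})$ is a permutation matrix scaled by nonzero entries.

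\textbf{Step 2: the permutation is constant.} This is the main obstacle, since the hypothesis is purely pointwise. Fix $\sigma$ and consider the set $U_\sigma=\{\widehat{\mathbf s}: \partial_{\sigma(k)}\varphi_k(\widehat{\mathbf s})\neq 0 \text{ for all } k\}$. It is open by continuity of $D\varphi$, since $\varphi$ is $C^3$. The sets $U_\sigma$ are pairwise disjoint, because a point in two of them would have two nonzero entries in some row. They also cover $\widehat{\mathcal S}$ by Step 1. A product of open intervals is connected, so exactly one $U_\sigma$ is nonempty and equals all of $\widehat{\mathcal S}$.

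\textbf{Step 3: each component depends on one coordinate.} For every $k$ and every $j\neq\sigma(k)$ we now have $\partial_j\varphi_k\equiv0$ on $\widehat{\mathcal S}$. Because the domain is a product of intervals, moving along coordinate $j$ with the others fixed stays inside the domain, so $\varphi_k$ is constant in $\widehat s_j$. Hence $\varphi_k(\widehat{\mathbf s})=f_k(\widehat s_{\sigma(k)})$ for a $C^3$ function $f_k:\widehat I_{\sigma(k)}\to\mathbb R$ with $f_k'\neq0$.

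\textbf{Step 4: each $f_k$ is a one-dimensional diffeomorphism onto $I_k$.}
\begin{itemize}
\item Since $f_k'\neq 0$ on an interval, $f_k$ is strictly monotone, hence injective.
\item Surjectivity of $\varphi$ onto $\prod_k I_k$ gives $f_k(\widehat I_{\sigma(k)})=I_k$.
\item By the inverse function theorem, $f_k^{-1}$ is $C^3$.
\end{itemize}
Therefore $\varphi=(f_1\times\cdots\times f_n)\circ P_\sigma$, where $P_\sigma$ is the fixed coordinate permutation. Consequently $\varphi^{-1}=P_\sigma^{-1}\circ(f_1^{-1}\times\cdots\times f_n^{-1})$, which has the same form.
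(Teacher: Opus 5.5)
Your proof is correct and follows essentially the same route as the paper. Invertibility forces a scaled permutation pattern at each point, connectedness makes that pattern constant, the product structure reduces each $\varphi_k$ to a function of $\hat s_{\sigma(k)}$, and monotonicity plus surjectivity give one-dimensional $C^3$ diffeomorphisms onto $I_k$. The only cosmetic difference is that the paper shows row by row that the nonzero column is locally constant, whereas you partition $\widehat{\mathcal S}$ into the disjoint open sets $U_\sigma$ indexed by whole permutations; this is the same connectedness argument packaged differently.
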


\begin{proof}
Write $\varphi=(\varphi_1,\ldots,\varphi_n)$. If the Jacobian of $\varphi$ has at most one nonzero entry in each row at every point, invertibility implies exactly one nonzero entry in each row and each column. For a row $k$, its nonzero column is locally constant: a nonzero entry stays nonzero in a sufficiently small neighborhood by continuity, and no other entry in that row can become nonzero there. Since $\widehat{\mathcal{S}}$ is connected, this column is one fixed index $\sigma(k)$ throughout the domain. The map $\sigma$ is a permutation.

For $j\ne\sigma(k)$, $\partial\varphi_k/\partial\hat s_j=0$ on $\widehat{\mathcal{S}}$.
Any two points with the same $\hat s_{\sigma(k)}$ can be joined by a sequence of coordinate segments in the product domain, along each of which $\varphi_k$ is constant. Hence $\varphi_k(\widehat{\mathbf{s}})=q_k(\hat s_{\sigma(k)})$.
Each $q_k$ is $C^3$ and its derivative is nonzero on
$\widehat I_{\sigma(k)}$, so it is strictly monotone there.
Surjectivity of $\varphi$ implies
that $q_k$ maps onto $I_k$. Its inverse is therefore a global $C^3$
diffeomorphism. Setting $\pi(i)=\sigma^{-1}(i)$ and
$\mathcal T_i=q_{\pi(i)}^{-1}$ gives the claimed form of $\varphi^{-1}$.
\end{proof}

\begin{theorem}[Component-wise identifiability, $L=2$]
\label{theorem:comp-identifiability}
Consider the setting of Theorem~\ref{theorem:block-identifiability} with $L=2$, so that $\mathcal U,\mathcal Y\subseteq\mathbb R^{3d_x}$ are the open domains for the recovery window $\mathbf x_{t:t-2}$ and the predictive window $\mathbf x_{t+4:t+2}$, respectively. Suppose the true model and its transition density satisfy the following assumptions:
\begin{enumerate}
\item (Smooth and positive density)
All state marginal densities and the transition density are strictly positive on their respective domains. Each coordinate log density $\log p(s_{k,t}|\mathbf{s}_{t-1})$ is $C^3$ (i.e., has continuous third-order partial derivatives).
For some $t_0$, $\mathbf x_{t_0:t_0-2}$ has a Lebesgue density positive almost everywhere on $\mathcal U$.
For every $\mathbf{s}\in\mathcal{S}$, the conditional distribution of $\mathbf x_{t+4:t+2}$ given $\mathbf{s}_t=\mathbf{s}$ admits a jointly Borel Lebesgue density.

\item (Conditional independence)
The current-state coordinates are mutually independent conditional on the previous state: $p(\mathbf{s}_t|\mathbf{s}_{t-1}) =\prod_{k=1}^{n}p(s_{k,t}|\mathbf{s}_{t-1})$.
\label{a:ci}

\item (Smooth finite-context recovery)
There is a surjective $C^3$ map $\mathbf m:\mathcal U\to\mathcal S$ such that $\mathbf{s}_t=\mathbf m(\mathbf{x}_{t:t-2})$ almost surely for every $t$.
Its Jacobian has full row rank: $\operatorname{rank}D\mathbf m(\mathbf u)=n$ for every $\mathbf u\in\mathcal U$.

\item (Predictive density separation)
For any distinct states $\mathbf{s},\mathbf{s}'\in\mathcal{S}$, the set $\{\mathbf y\in\mathcal Y: p(\mathbf y|\mathbf{s})\ne p(\mathbf y|\mathbf{s}')\}$ has positive probability under the marginal distribution of $\mathbf x_{t+4:t+2}$.

\item (Sufficient variability)
Let $\eta_{k,t}=\log p(s_{k,t}|\mathbf{s}_{t-1})$ and, for $l=1,\ldots,n$, define
\begin{equation}
 \boldsymbol\omega_{l,t}(\mathbf{s}_t,\mathbf{s}_{t-1})=\begin{pmatrix}
  \dfrac{\partial^2\eta_{1,t}}{\partial s_{1,t}\partial s_{l,t-1}},\cdots,
  \dfrac{\partial^2\eta_{n,t}}{\partial s_{n,t}\partial s_{l,t-1}},
  \dfrac{\partial^3\eta_{1,t}}{\partial s_{1,t}^2\partial s_{l,t-1}},\cdots,
  \dfrac{\partial^3\eta_{n,t}}{\partial s_{n,t}^2\partial s_{l,t-1}}
 \end{pmatrix}^{\top}\in\mathbb{R}^{2n}.
\end{equation}
For every fixed $\mathbf{s}_t\in\mathcal{S}$, there exist $2n$ pairs $(l_u,\mathbf{s}_{t-1}^{(u)})$, with $l_u\in\{1,\ldots,n\}$ and $\mathbf{s}_{t-1}^{(u)}\in\mathcal{S}$, such that the vectors $\boldsymbol\omega_{l_u,t}(\mathbf{s}_t,\mathbf{s}_{t-1}^{(u)})$ are linearly independent.
\label{a:variability}
\end{enumerate}
Then, for any estimated model that satisfies these assumptions with $(\widehat{\mathbf m},\widehat p,\widehat{\mathcal S})$ in place of $(\mathbf m,p,\mathcal S)$, meets the requirements on the estimated model in Theorem~\ref{theorem:block-identifiability}, and satisfies $S_0(\vartheta)=1$, there exist a single permutation $\pi$ and fixed $C^3$ diffeomorphisms $\mathcal T_i:I_{\pi(i)}\to\widehat I_i$ for which $\widehat m_i(\mathbf{x}_{t:t-2}) =\mathcal T_i\bigl(m_{\pi(i)}(\mathbf{x}_{t:t-2})\bigr)$ holds for $i=1,\ldots,n$.
\end{theorem}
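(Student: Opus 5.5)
Under these assumptions the hypotheses of Theorem~\ref{theorem:block-identifiability} hold for both models, with Lemma~\ref{a:bottleneck} supplying the bottleneck from $S_0(\vartheta)=1$. We therefore start from a $C^3$ diffeomorphism $\varphi:\widehat{\mathcal S}\to\mathcal S$ with $\mathbf s_t=\varphi(\widehat{\mathbf s}_t)$ almost surely at all times. The plan is to show that the Jacobian of $\varphi^{-1}$ (equivalently of $\varphi$) has at most one nonzero entry per row at every point, and then to invoke Lemma~\ref{a:global} to get the permutation-plus-coordinatewise form. This is the standard temporally-dependent identifiability route (as in TDRL/CaRING-type arguments).

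\textbf{Step 1: relating the transition densities.} Since $\varphi$ is fixed and time independent, the pair $(\widehat{\mathbf s}_t,\widehat{\mathbf s}_{t-1})=(\varphi^{-1}(\mathbf s_t),\varphi^{-1}(\mathbf s_{t-1}))$. The change-of-variables formula, applied to the joint density of consecutive states, gives
\[
\log \widehat p(\widehat{\mathbf s}_t|\widehat{\mathbf s}_{t-1})=\log p(\mathbf s_t|\mathbf s_{t-1})+\log\lvert\det D\varphi(\widehat{\mathbf s}_t)\rvert .
\]
This identity holds almost everywhere and, by continuity and positivity of the densities, everywhere. By condition~\ref{a:ci} for both models, each side splits as a sum over coordinates: $\sum_i \widehat\eta_i(\hat s_{i,t},\widehat{\mathbf s}_{t-1})=\sum_k \eta_k(s_{k,t},\mathbf s_{t-1})+\log\lvert\det D\varphi\rvert$.

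\textbf{Step 2: eliminating the Jacobian term.} Fix indices $i\neq j$. Differentiate both sides first in $\hat s_{i,t}$ and then in $\hat s_{j,t}$. The left-hand side vanishes because $\widehat\eta_i$ depends on $\hat s_{i,t}$ alone among the current coordinates. Writing $\varphi_{k,i}=\partial s_{k,t}/\partial\hat s_{i,t}$, the right-hand side becomes
\[
\sum_k\Bigl(\tfrac{\partial^2\eta_k}{\partial s_{k,t}^2}\varphi_{k,i}\varphi_{k,j}+\tfrac{\partial\eta_k}{\partial s_{k,t}}\tfrac{\partial^2 s_{k,t}}{\partial\hat s_{i,t}\partial\hat s_{j,t}}\Bigr)+\tfrac{\partial^2\log\lvert\det D\varphi\rvert}{\partial\hat s_{i,t}\partial\hat s_{j,t}}=0 .
\]
Next differentiate in $s_{l,t-1}$, holding $\widehat{\mathbf s}_t$ fixed; this amounts to varying $\mathbf s_{t-1}$ freely, since $\varphi$ is a diffeomorphism. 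The determinant term does not depend on $\mathbf s_{t-1}$, so it drops out, leaving
\[
\sum_k\Bigl(\tfrac{\partial^3\eta_k}{\partial s_{k,t}^2\partial s_{l,t-1}}\varphi_{k,i}\varphi_{k,j}+\tfrac{\partial^2\eta_k}{\partial s_{k,t}\partial s_{l,t-1}}\tfrac{\partial^2 s_{k,t}}{\partial\hat s_{i,t}\partial\hat s_{j,t}}\Bigr)=0 .
\]
This is the inner product of $\boldsymbol\omega_{l,t}(\mathbf s_t,\mathbf s_{t-1})$ with a vector that depends only on $\mathbf s_t$.

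\textbf{Step 3: using sufficient variability.} Hold $\mathbf s_t$ fixed. By condition~\ref{a:variability}, the $2n$ vectors $\boldsymbol\omega_{l_u,t}(\mathbf s_t,\mathbf s_{t-1}^{(u)})$ are linearly independent, so the coefficient vector must be zero. In particular $\varphi_{k,i}\varphi_{k,j}=0$ for all $k$ and all $i\neq j$, at every point. Hence each row of $D\varphi$ has at most one nonzero entry. Lemma~\ref{a:global} then makes $\varphi^{-1}$ a fixed permutation composed with one-dimensional $C^3$ diffeomorphisms $\mathcal T_i$. Composing with $\widehat{\mathbf m}=\varphi^{-1}\circ\mathbf m$ on $\mathcal U$ gives $\widehat m_i=\mathcal T_i\circ m_{\pi(i)}$.

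\textbf{Main obstacle.} The delicate step is justifying Step 1 pointwise. The identity between transition densities must hold for every pair $(\mathbf s_t,\mathbf s_{t-1})$ in $\mathcal S\times\mathcal S$, not just almost surely, and the differentiation must be legitimate. This requires the positivity of the transition density on the whole product domain, so that $\mathbf s_{t-1}$ ranges freely given $\mathbf s_t$. It also requires continuity ($C^3$ regularity of $\eta$ and of $\varphi$) to upgrade the almost-everywhere equality to an everywhere one. Finally, the partial derivatives in $s_{l,t-1}$ are taken at fixed $\widehat{\mathbf s}_t$, and this has to be tracked carefully through the reparameterization $\widehat{\mathbf s}_{t-1}=\varphi^{-1}(\mathbf s_{t-1})$.
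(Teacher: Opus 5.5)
Your proposal is correct and follows the paper's proof essentially step for step. You take $\varphi$ from Theorem~\ref{theorem:block-identifiability}, write the conditional change-of-variables identity and upgrade it from almost everywhere to everywhere by positivity and continuity, make the left side vanish under the mixed $\hat s_{i,t}\hat s_{j,t}$ derivative via conditional independence of the estimated model, differentiate in $s_{l,t-1}$ at fixed $\widehat{\mathbf s}_t$ to remove the determinant term, use sufficient variability to force $H_{ki}H_{kj}=0$, and conclude via Lemma~\ref{a:global} and $\widehat{\mathbf m}=\varphi^{-1}\circ\mathbf m$ exactly as the paper does.
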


\begin{proof}
By Theorem~\ref{theorem:block-identifiability}, there is a fixed $C^3$ diffeomorphism
$\varphi:\widehat{\mathcal{S}}\to\mathcal{S}$. On the common observed-process space, define
$\mathbf{s}_t=\mathbf m(\mathbf{x}_{t:t-2})$ and
$\widehat{\mathbf{s}}_t=\widehat{\mathbf m}(\mathbf{x}_{t:t-2})$.
Assumption~\ref{a:recovery} ensures that these sequences have the latent-process distributions of their respective models, and Theorem~\ref{theorem:block-identifiability} gives
$\mathbf{s}_t=\varphi(\widehat{\mathbf{s}}_t)$ simultaneously for every $t$ almost surely.
In particular, the same function also relates the previous states: $\widehat{\mathbf{s}}_{t-1}=\varphi^{-1}(\mathbf{s}_{t-1})$.

Write $\varphi=(\varphi_1,\ldots,\varphi_n)$ and denote the Jacobian by
\begin{equation}
 \mathbf H_t=\frac{\partial\varphi(\widehat{\mathbf{s}}_t)}{\partial\widehat{\mathbf{s}}_t},
 \qquad
 H_{ki,t}=\frac{\partial \varphi_k(\widehat{\mathbf{s}}_t)}{\partial\hat s_{i,t}}.
 \label{a:jacobian}
\end{equation}
The subscript $t$ records the state at which the same fixed Jacobian
function is evaluated; it does not introduce a time-dependent
transformation. In the density identities below, $\widehat{\mathbf{s}}_t$ and
$\mathbf{s}_{t-1}$ denote freely varying state values. Conditional change of
variables gives
\begin{equation}
 \widehat p(\widehat{\mathbf{s}}_t|\varphi^{-1}(\mathbf{s}_{t-1}))
 =p(\varphi(\widehat{\mathbf{s}}_t)|\mathbf{s}_{t-1})\,|\det\mathbf H_t|.
 \label{a:change-of-variables}
\end{equation}
This identity first holds for almost every $(\widehat{\mathbf{s}}_t,\mathbf{s}_{t-1})$ with respect to Lebesgue measure, since the previous-state density is strictly positive.
The same condition and the smoothness of $\varphi$ ensure continuity of both sides, which extends the identity to all
$(\widehat{\mathbf{s}}_t,\mathbf{s}_{t-1})\in\widehat{\mathcal{S}}\times\mathcal{S}$.
Recall $\eta_{k,t}=\log p(s_{k,t}|\mathbf{s}_{t-1})$. Evaluating these functions at $\mathbf{s}_t=\varphi(\widehat{\mathbf{s}}_t)$, conditional independence Assumption~\ref{a:ci} then gives
\begin{equation}
 \log\widehat p(\widehat{\mathbf{s}}_t|\varphi^{-1}(\mathbf{s}_{t-1}))
 =\sum_{k=1}^{n}\eta_{k,t}+\log|\det\mathbf H_t|.
 \label{a:log-density}
\end{equation}

The estimated model also satisfies the assumptions of Theorem~\ref{theorem:comp-identifiability}, so its transition density satisfies condition~\ref{a:ci}. For $i\ne j$, with
$\mathbf{s}_{t-1}$ held fixed,
\begin{equation}
 \frac{\partial^2
   \log\widehat p(\widehat{\mathbf{s}}_t|\varphi^{-1}(\mathbf{s}_{t-1}))}
  {\partial\hat s_{i,t}\partial\hat s_{j,t}}=0.
 \label{a:zero-mixed}
\end{equation}
Using the chain rule in Equ.~(\ref{a:log-density}), we obtain
\begin{equation}
 0=\sum_{k=1}^{n}\left[
   \frac{\partial^2\eta_{k,t}}{\partial s_{k,t}^2}H_{ki,t}H_{kj,t}
   +\frac{\partial\eta_{k,t}}{\partial s_{k,t}}
    \frac{\partial H_{ki,t}}{\partial\hat s_{j,t}}\right]
   +\frac{\partial^2\log|\det\mathbf H_t|}
    {\partial\hat s_{i,t}\partial\hat s_{j,t}},
 \label{a:mixed-current}
\end{equation}
where all derivatives of $\eta_{k,t}$ are evaluated at $(\varphi_k(\widehat{\mathbf{s}}_t),\mathbf{s}_{t-1})$.

Differentiate Equ.~(\ref{a:mixed-current}) with respect to $s_{l,t-1}$, holding $\widehat{\mathbf{s}}_t$ fixed. Since $\varphi$ and $\mathbf H_t$ depend only on $\widehat{\mathbf{s}}_t$, the determinant term vanishes and no derivative of either function with respect to the previous state appears:
\begin{equation}
 0=\sum_{k=1}^{n}\left[
  \frac{\partial^3\eta_{k,t}}{\partial s_{k,t}^2\partial s_{l,t-1}}
   H_{ki,t}H_{kj,t}
  +\frac{\partial^2\eta_{k,t}}{\partial s_{k,t}\partial s_{l,t-1}}
   \frac{\partial H_{ki,t}}{\partial\hat s_{j,t}}
  \right].
 \label{a:history-derivative}
\end{equation}
Fix $\widehat{\mathbf{s}}_t$, and hence $\mathbf{s}_t=\varphi(\widehat{\mathbf{s}}_t)$. The $2n$ coefficients
\[
 \left(
 \frac{\partial H_{1i,t}}{\partial\hat s_{j,t}},\ldots,
 \frac{\partial H_{ni,t}}{\partial\hat s_{j,t}},
 H_{1i,t}H_{1j,t},\ldots,H_{ni,t}H_{nj,t}
 \right)^{\mathsf T}
\]
do not change as $\mathbf{s}_{t-1}$ varies.
Equ.~(\ref{a:history-derivative}) says that their inner product with $\boldsymbol\omega_{l,t}(\mathbf{s}_t,\mathbf{s}_{t-1})$ is zero for every $l$ and every previous state. The $2n$ linearly independent vectors in sufficient variability Assumption~\ref{a:variability} therefore force all these coefficients to vanish. In particular,
\begin{equation}
 H_{ki,t}H_{kj,t}=0
 \quad\text{for all }k,\ i\ne j,\ \widehat{\mathbf{s}}_t\in\widehat{\mathcal{S}}.
 \label{a:sparsity}
\end{equation}

Apply Lemma~\ref{a:global} to $\varphi$. There are fixed one-dimensional $C^3$ diffeomorphisms $\mathcal T_i$ and a fixed permutation $\pi$ such that $\hat s_{i,t}=\mathcal T_i(s_{\pi(i),t})$ almost surely for every $t$.
Substitution of the recovery maps proves that $\widehat m_i(\mathbf{x}_{t:t-2}) =\mathcal T_i\bigl(m_{\pi(i)}(\mathbf{x}_{t:t-2})\bigr)$ holds for $i=1,\ldots,n$.
\end{proof}

\paragraph{Extension to arbitrary finite $L$.}
Fix $L\ge1$. Replace $\mathbf x_{t:t-2}$ by $\mathbf x_{t:t-L}$ and $\mathbf x_{t+4:t+2}$ by $\mathbf x_{t+2L:t+L}$ throughout the alignment argument. Because each chunk $\mathbf x_{t'}$ depends only on $\mathbf{s}_{t':t'-L}$, the latent variables entering the recovery window belong to $\mathbf{s}_{t:t-2L}$, while those entering the predictive window belong to $\mathbf{s}_{t+2L:t}$. Thus the two latent blocks share only $\mathbf{s}_t$; first-order Markovity and conditional independence of the chunks give $\mathbf x_{t+2L:t+L}\mathrel{\perp\!\!\!\perp}\mathbf x_{t:t-L}\mid\mathbf{s}_t$, and Lemma~\ref{a:bottleneck} gives the same for the estimated model. The predictive-separation and smooth-recovery steps of Theorem~\ref{theorem:block-identifiability} then apply with $\mathbb R^{(L+1)d_x}$ in place of $\mathbb R^{3d_x}$ and $\mathbb R^{(L+1)d_x-n}$ in place of $\mathbb R^{3d_x-n}$, yielding the same fixed $C^3$ diffeomorphism $\varphi$. Lemma~\ref{a:global} and the change-of-variables and sufficient-variability argument above involve only the one-step transition density $p(\mathbf{s}_t|\mathbf{s}_{t-1})$ and are unchanged. Consequently, the stated component-wise identification holds for every fixed finite $L\ge1$.

\section{Related Work}
\label{app:related_work}

\paragraph{Autoregressive video generation.}
Modern video diffusion models typically denoise all frames in a fixed temporal window with bidirectional attention~\citep{blattmann2023stablevideodiffusion,yang2024cogvideox,kong2024hunyuanvideo,wan2025wan}. This design supports high-quality offline synthesis but requires the complete clip to be generated jointly. Autoregressive video models instead generate frames or chunks sequentially. This causal formulation reduces viewing latency and enables efficient streaming over extended horizons through KV caching~\citep{yin2025causvid,huang2025selfforcingbridgingtraintest}. Within this family, Diffusion Forcing~\citep{chen2024diffusionforcing} assigns an independent noise level to each token. It connects causal next-token prediction with full-sequence diffusion and supports flexible generation horizons. Another line uses distribution matching distillation (DMD)~\citep{yin2024dmd} to reduce the sampling cost of causal generators. CausVid~\citep{yin2025causvid} combines ODE initialization with asymmetric DMD to obtain a few-step causal student for real-time generation. Self Forcing~\citep{huang2025selfforcingbridgingtraintest} trains on self-generated rollouts and applies a holistic distribution-matching objective to mitigate exposure bias. Causal Forcing~\citep{zhu2026causalforcing} uses an autoregressive ODE teacher to bridge the architectural gap between bidirectional teachers and causal students. Causal Forcing++~\citep{zhao2026causal} replaces costly offline ODE trajectories with causal consistency distillation, making few-step initialization scalable. Beyond DMD-based post-training, Resampling Forcing~\citep{guo2025resamplingforcing} uses self-resampled histories and a sparse causal mask for teacher-free end-to-end training.

\paragraph{Long video generation.}
Consistency over long autoregressive rollouts has been addressed through training, denoising, and context-management strategies. Self-Forcing++~\citep{cui2026selfforcingplusplus} mitigates quality degradation when the student extrapolates beyond its training horizon by providing teacher guidance on segments sampled from self-generated long videos. Rolling Forcing~\citep{liu2025rolling} mitigates error accumulation by jointly denoising frames at progressively increasing noise levels within a rolling window and retaining initial-frame KV states as a global context anchor. StreamingT2V~\citep{DBLP:conf/cvpr/HenschelKPHTWNS25} uses features from the preceding chunk to facilitate smooth transitions and preserves scene and object features from the first video chunk to prevent forgetting of the initial scene. LongLive~\citep{yang2026longlive} uses streaming long tuning to align long-video training and inference, together with short window attention and a frame sink to improve efficiency while preserving long-range consistency. Context Forcing~\citep{chen2026contextforcing} resolves the mismatch between a long-context student and a short-context teacher by training with a long-context teacher, while its Slow-Fast Memory architecture makes the growing context computationally tractable. DynaMem~\citep{lin2026dynamem} addresses semantic drift, motion decay, and appearance instability through Semantic-Adaptive Hierarchical Memory, Motion-Prioritized Optimization, and Reference-Anchored Perceptual Alignment. Recent training-free approaches such as Deep Forcing and LoL implement context management at inference time by modifying cache retention and/or positional encoding. Deep Forcing~\citep{yi2026deepforcing} introduces Deep Sink, which retains a substantially enlarged attention sink with temporal RoPE adjustment, and Participative Compression, which prunes redundant and degraded history from the KV cache. LoL~\citep{cui2026lol} introduces multi-head RoPE jitter to break inter-head attention homogenization and mitigate sink-collapse over extreme horizons. Instead, our method improves temporal consistency by learning representations with long-horizon predictive capabilities.

\paragraph{Future-predictive representations and multi-horizon generation.}
Predicting future observations has long served as an auxiliary supervision for learning sequential representations. PBL~\citep{pmlr-v119-guo20g} couples forward prediction of future-observation embeddings with reverse prediction of agent states from the corresponding latents, achieving a bootstrapping objective. SPR~\citep{schwarzer2021spr} instead rolls a learned transition model forward in latent space and matches its multi-step predictions to representations produced by an exponential-moving-average target encoder. In text generation, multi-token prediction~\citep{pmlr-v235-gloeckle24a} equips a shared language-model trunk with independent output heads for predicting multiple subsequent tokens, while Medusa~\citep{cai2024medusa} and EAGLE~\citep{li2024eagle} accelerate LLM decoding by drafting future tokens with lightweight heads and verifying them against the backbone. In image generation, Mirai~\citep{yu2026mirai} aligns internal representations with explicit or implicit foresight during autoregressive training while preserving the causal autoregressive inference path. The closest concurrent works in video generation are Video-Mirai and Next Forcing. Video-Mirai~\citep{yu2026videomirai} incorporates future information through alignment of JEPA-style representation~\citep{bardes2024vjepa}, while Next Forcing~\citep{xu2026nextforcing} uses recurrent future chunk prediction to improve performance in robotic tasks. In contrast, \method learns a shared representation for parallel multi-chunk prediction through independent heads and reuses these heads for verifier-guided speculative inference.

\section{Verifier Details}
\label{app:verifier}

\paragraph{Verifier prediction.}
The verifier $V_\phi$ is a lightweight MLP with trainable parameters $\phi$. It takes the current features $\mathbf{h}_W^{r}$ and the candidate features from~\eqref{eq:candidate_prediction} as inputs and outputs the selection and rejection probabilities. We use $\mathbf{h}_W^{r}$ and the candidates $\hat{\mathbf{h}}_{W+1}^{r,m}$ to decide which candidate to be selected:
\begin{equation}
\left(\pi_1^r,\ldots,\pi_M^r,\pi_{\mathrm{rej}}^r\right) = \operatorname{Softmax}\left(V_\phi\left(\operatorname{sg}\left[\mathbf{h}_W^{r}\right],\left\{\operatorname{sg}\left[\hat{\mathbf{h}}_{W+1}^{r,m}\right]\right\}_{m=1}^{M}\right)\right).
\label{eq:verifier_prediction}
\end{equation}
The stop-gradient operations prevent the verifier loss~\eqref{eq:verifier_loss} from updating the generator.

\paragraph{Label construction.}
After the window advances to $\mathcal{W}_{r+1}$, we compare each candidate with the backbone features $\mathbf{h}_W^{r+1}$ and define the normalized feature error of candidate $m$ as
\begin{equation}
E_m^r = \operatorname{MSE}\left(\operatorname{LN}\left(\hat{\mathbf{h}}_{W+1}^{r,m}\right),\operatorname{LN}\left(\mathbf{h}_W^{r+1}\right)\right).
\label{eq:candidate_error}
\end{equation}
Let $\delta$ denote the maximum acceptable error. We set $y_r$ to the candidate $m_r^\star=\arg\min_m E_m^r$ used in~\eqref{eq:candidate_alignment} if its error does not exceed $\delta$, and to the rejection label $\mathrm{rej}$ otherwise:
\begin{equation}
y_r =
\begin{cases}
m_r^\star, & E_{m_r^\star}^r \leq \delta,\\[4pt]
\mathrm{rej}, & \text{otherwise}.
\end{cases}
\label{eq:verifier_label}
\end{equation}

\section{Additional Experimental Details and Results}
\label{app:experiments}

\subsection{Implementation Details}
\label{app:implementation}
All models are trained on 8 NVIDIA A100 GPUs with a global batch size of eight and fully sharded data parallelism. We optimize the generator and the trainable score model with AdamW~\citep{loshchilov2019decoupled} using $(\beta_1,\beta_2)=(0, 0.999)$ and learning rates of $1.5\times10^{-6}$ and $4\times10^{-7}$, respectively, and update the score model five times per generator update. The generator uses an exponential moving average with decay 0.99 starting from iteration 200, and all reported results use the EMA weights. The VAE and text encoder remain frozen throughout training. Following \citet{liu2025rolling}, we use the denoising timesteps $\{1000, 800, 600, 400, 200\}$ with a timestep shift of 5, and the teacher uses a classifier-free guidance scale of 3. We train on the VidProM~\citep{wang2024vidprom} prompt set released by \citet{huang2025selfforcingbridgingtraintest}. Each chunk contains three latent frames, and each training rollout contains 21, 24, or 27 latent frames, i.e., 7 to 9 chunks.

In the first stage, we train for 3{,}000 iterations with the DMD objective and the feature and velocity alignment losses, without candidate embeddings or the verifier. In the second stage, we initialize from the stage-1 checkpoint, enable the candidate embeddings and the verifier, and continue training all parameters for 1{,}000 iterations. Each future prediction head is a lightweight transformer block with 12 attention heads and an MLP ratio of 2. It adds $\mathbf{h}_W^r$, $\mathbf{n}_{r,k}$ and a learned offset embedding, applies self-attention over the three latent frames of the chunk independently at each spatial location, and updates $\mathbf{h}_W^r$ through a noise-gated residual attention branch followed by an MLP. The heads do not attend to other window positions or to the text condition. The candidate embeddings are initialized from $\mathcal{N}(0, 0.02^2)$, with the first embedding set to zero so that the first candidate coincides with the original prediction. The verifier is a three-layer MLP with a hidden size of 96. The loss weights are $\lambda_{\mathrm{feat}}=0.1$, $\lambda_{\mathrm{vel}}=0.01$ and $\lambda_{\mathrm{aux}}=0.1$. For the candidate alignment loss we use $M=2$ candidates with $\beta=0.05$, and the verifier acceptance threshold is $\delta=0.12$.

Since our training setup follows Rolling Forcing, it is equivalent to our stage-1 model with ODE initialization and $\lambda_{\mathrm{feat}}=\lambda_{\mathrm{vel}}=0$, and serves as the DMD-only baseline in Table~\ref{tab:abl_structure}.

\subsection{Evaluation Metrics}
\label{app:metrics}
All metrics are computed on the same 200 videos per method.

\paragraph{VBench.} The Quality score follows the official aggregation~\citep{huang2024vbench}: each of the seven quality dimensions is normalized with the official minimum and maximum values, Dynamic Degree is weighted by 0.5 and the other six dimensions by 1, and the weighted sum is divided by 6.5 and multiplied by 100. None of the 200 prompts is applicable to the Color dimension, so we exclude Color and report the Semantic score as the average of the remaining eight normalized semantic dimensions. The Total score is $0.8\times\text{Quality} + 0.2\times\text{Semantic}$.

\paragraph{VBench-Long.} We evaluate six quality dimensions of VBench-Long~\citep{huang2025vbenchpp} in the \texttt{long\_custom\_input} mode: subject consistency, background consistency, motion smoothness, dynamic degree, aesthetic quality, and imaging quality. We aggregate them in the same way as the VBench Quality score, dividing the weighted sum by 5.5.

\paragraph{VideoScore.} We average the five dimensions of VideoScore v1.1~\citep{he2024videoscore} for each video and then average over videos.

\paragraph{MUSIQ and quality drift.} Following \citet{liu2025rolling}, we evaluate MUSIQ~\citep{ke2021musiq} on all frames of the first and last 5 seconds of each video. Let $H_i$ and $T_i$ be the mean MUSIQ scores of the first and last 5 seconds of video $i$. The MUSIQ avg. score averages $H_i$ and $T_i$ over all videos, and the quality drift is $\frac{1}{N}\sum_{i=1}^{N}|T_i - H_i|$. 

\subsection{Efficiency Measurement}
\label{app:efficiency}
We measure all methods on the same NVIDIA A100 80GB GPU with batch size one and bfloat16 precision. Table~\ref{tab:app_speed_breakdown} breaks down the per-video inference cost. Generating a 30-second video involves two kinds of backbone passes: sampling passes that denoise the rolling window, and cache-update passes that write the key--value states of each newly committed clean chunk into the cache. An accepted prediction advances the window by one extra chunk and skips one sampling pass. However, every committed chunk still requires its own cache-update pass, and the first windows cannot be skipped because the rolling window must first be filled. As a result, even when every prediction is accepted ($\ell=1$, unverified), the number of sampling passes only drops from 46 to 27, while the 46 cache-update passes, which account for about 20\% of the standard inference time, remain unchanged. This bounds the speedup of $\ell=1$ at $1.56\times$ rather than $2\times$. Our verifier accepts 32.1\% of the proposals (9 out of 28 rounds per video), which skips 9 sampling passes and yields a $1.20\times$ speedup. The overhead of the prediction heads and the verifier is negligible, below 0.2 seconds per video.

\begin{table}[h]
\centering
\caption{Per-video inference cost breakdown on a single A100 GPU (30-second video, 480 frames). }
\label{tab:app_speed_breakdown}
\small
\setlength{\tabcolsep}{4pt}
\begin{tabular}{lccccc}
\toprule
Inference & Accept rate & Sampling passes & Cache-update passes  & Speedup \\
\midrule
Standard ($\ell=0$) & -- & 46 & 46 & $1.00\times$ \\
Unverified, $\ell=1$ & 100\% & 27 & 46 & $1.56\times$ \\
Verified (Ours) & 32.1\% & 37 & 46 & $1.20\times$ \\
\bottomrule
\end{tabular}
\end{table}

\subsection{User Study}
\label{app:user_study}
We use randomly selected prompts from the evaluation set and generate 30-second videos at $832\times480$ and 16\,fps, compressed with H.264 at CRF 30 for web playback. Each participant completes five comparisons. In each comparison, the baseline is sampled uniformly from Rolling Forcing, LongLive, and Video-Mirai, our video is sampled uniformly from Ours and the stage-1 model with standard inference ($\ell=0$), and the left--right order is randomized. Participants choose the better video separately for visual quality, long-term consistency, and dynamics. We only count submitted responses that pass automatic and manual quality checks, which yields 59 valid participants and 295 comparisons, of which 130 involve Ours and 165 involve the stage-1 model. All preference rates of Ours in Figure~\ref{fig:user_study} are significantly above 50\% under a two-sided binomial test ($p<0.001$).

Table~\ref{tab:app_user_study_wo} reports the results of the stage-1 model. Even without accelerated inference, participants prefer it over all three baselines on every criterion. The preferences are significant ($p<0.05$) in all cases except for visual quality ($p=0.24$) and consistency ($p=0.56$) against LongLive.

\begin{table}[h]
\centering
\caption{User study of the stage-1 model with standard inference ($\ell=0$). Each entry is the percentage of pairwise comparisons in which participants preferred this model over the baseline. Average is computed over all comparisons of the three criteria.}
\label{tab:app_user_study_wo}
\small
\setlength{\tabcolsep}{5pt}
\begin{tabular}{lcccc}
\toprule
Stage 1 vs. & Visual Quality & Consistency & Dynamics & Avg. \\
\midrule
Rolling Forcing & 81.8\% & 86.4\% & 87.9\% & 85.4\% \\
LongLive & 59.6\% & 55.3\% & 68.1\% & 61.0\% \\
Video-Mirai & 78.8\% & 73.1\% & 86.5\% & 79.5\% \\
\bottomrule
\end{tabular}
\end{table}

\subsection{Per-Dimension Results}
\label{app:per_dimension}
Tables~\ref{tab:app_vbench_quality}--\ref{tab:app_musiq} report the individual dimensions behind the aggregate scores in Table~\ref{tab:main}. \textbf{Bold} and \underline{underline} denote the best and second-best results. Please note that some individual scores can not reflect the true visual quality. For example, in Table~\ref{tab:app_vbench_long}, the background consistency score is high, but the sampled videos have a quality shift and tend to stay static.

\begin{table}[h]
\centering
\caption{Per-dimension VBench quality scores (\%).}
\label{tab:app_vbench_quality}
\small
\setlength{\tabcolsep}{4pt}
\begin{tabular}{lccccccc}
\toprule
Method & Subject$\uparrow$ & Background$\uparrow$ & Flickering$\uparrow$ & Smoothness$\uparrow$ & Dynamic$\uparrow$ & Aesthetic$\uparrow$ & Imaging$\uparrow$ \\
\midrule
CausVid & 87.59 & 90.60 & \underline{98.13} & \textbf{99.06} & 21.00 & 57.35 & 67.47 \\
Causal Forcing++ & 85.33 & 90.48 & 95.93 & 97.71 & 72.00 & 60.33 & 66.13 \\
Self Forcing & \underline{89.59} & \underline{92.27} & \textbf{98.17} & \underline{98.77} & 24.50 & 61.07 & \textbf{69.57} \\
LongLive & \textbf{89.70} & \textbf{92.97} & 96.20 & 97.89 & 62.00 & \textbf{62.95} & \underline{69.34} \\
Rolling Forcing & 84.36 & 88.72 & 94.34 & 97.86 & \underline{98.50} & 58.19 & 68.24 \\
Video-Mirai & 82.85 & 88.02 & 95.99 & 96.92 & 47.00 & 57.30 & 64.44 \\
\midrule
\textbf{Ours} & 87.95 & 91.91 & 95.69 & 97.87 & \textbf{100.00} & \underline{61.92} & 68.86 \\
\bottomrule
\end{tabular}
\end{table}

\begin{table}[h]
\centering
\caption{Per-dimension VBench semantic scores (\%). The first row reports the number of prompts to which each dimension applies.}
\label{tab:app_vbench_semantic}
\small
\setlength{\tabcolsep}{4pt}
\resizebox{\linewidth}{!}{%
\begin{tabular}{lcccccccc}
\toprule
Method & Object$\uparrow$ & Multi-Obj.$\uparrow$ & Action$\uparrow$ & Spatial$\uparrow$ & Scene$\uparrow$ & App. Style$\uparrow$ & Temp. Style$\uparrow$ & Overall Cons.$\uparrow$ \\
\midrule
\textit{Valid videos} & 119 & 20 & 36 & 5 & 54 & 8 & 36 & 200 \\
\midrule
CausVid & 92.12 & \underline{77.50} & \textbf{94.44} & 77.28 & 35.07 & 21.29 & 22.19 & 23.51 \\
Causal Forcing++ & 95.06 & 76.88 & \textbf{94.44} & \underline{87.15} & 42.82 & \underline{21.35} & 22.72 & 24.46 \\
Self Forcing & \underline{95.48} & 65.31 & \underline{91.67} & \textbf{90.20} & 48.73 & 21.17 & 24.17 & 24.85 \\
LongLive & 95.43 & \textbf{80.94} & \textbf{94.44} & 80.74 & \underline{49.54} & 21.17 & \underline{24.95} & \underline{25.97} \\
Rolling Forcing & 87.61 & 53.44 & \underline{91.67} & 78.49 & \textbf{51.62} & 21.34 & 24.06 & 25.05 \\
Video-Mirai & 90.81 & 65.62 & \underline{91.67} & 81.84 & 45.49 & \textbf{21.71} & 23.53 & 25.00 \\
\midrule
\textbf{Ours} & \textbf{97.43} & 75.94 & \underline{91.67} & 74.61 & 46.76 & 20.13 & \textbf{25.56} & \textbf{26.23} \\
\bottomrule
\end{tabular}}
\end{table}

\begin{table}[h]
\centering
\caption{Per-dimension VBench-Long scores (\%).}
\label{tab:app_vbench_long}
\small
\setlength{\tabcolsep}{4pt}
\begin{tabular}{lcccccc}
\toprule
Method & Subject$\uparrow$ & Background$\uparrow$ & Smoothness$\uparrow$ & Dynamic$\uparrow$ & Aesthetic$\uparrow$ & Imaging$\uparrow$ \\
\midrule
CausVid & \underline{98.15} & \textbf{96.99} & \textbf{99.08} & 21.80 & 56.30 & 67.18 \\
Causal Forcing++ & 96.09 & 95.76 & 97.74 & 60.77 & 58.86 & 65.39 \\
Self Forcing & \textbf{98.19} & \underline{96.90} & \underline{98.82} & 21.77 & 59.73 & \textbf{69.20} \\
LongLive & 97.20 & 96.26 & 97.96 & 55.50 & \textbf{61.43} & \underline{68.66} \\
Rolling Forcing & 96.16 & 95.59 & 97.92 & \underline{88.20} & 57.05 & 67.73 \\
Video-Mirai & 96.41 & 96.21 & 96.93 & 36.43 & 56.54 & 63.84 \\
\midrule
\textbf{Ours} & 96.64 & 95.51 & 97.94 & \textbf{96.47} & \underline{60.32} & 68.07 \\
\bottomrule
\end{tabular}
\end{table}

\begin{table}[h]
\centering
\caption{Per-dimension VideoScore v1.1 scores.}
\label{tab:app_videoscore}
\small
\setlength{\tabcolsep}{4pt}
\begin{tabular}{lccccc}
\toprule
Method & Visual$\uparrow$ & Temporal$\uparrow$ & Dynamic$\uparrow$ & T2V Align.$\uparrow$ & Factual$\uparrow$ \\
\midrule
CausVid & 2.683 & 2.450 & 2.985 & 2.738 & 2.330 \\
Causal Forcing++ & 2.698 & 2.472 & 3.044 & 2.803 & 2.332 \\
Self Forcing & 2.908 & 2.731 & 3.242 & 3.046 & 2.597 \\
LongLive & \textbf{3.157} & \textbf{2.985} & \underline{3.345} & \underline{3.206} & \textbf{2.847} \\
Rolling Forcing & 2.843 & 2.608 & 3.095 & 2.876 & 2.452 \\
Video-Mirai & 2.679 & 2.461 & 3.187 & 2.752 & 2.441 \\
\midrule
\textbf{Ours} & \underline{3.117} & \underline{2.890} & \textbf{3.379} & \textbf{3.217} & \underline{2.742} \\
\bottomrule
\end{tabular}
\end{table}

\begin{table}[h]
\centering
\caption{MUSIQ scores of the first and last 5 seconds and the drift measure.}
\label{tab:app_musiq}
\small
\begin{tabular}{lccccc}
\toprule
Method & Head$\uparrow$ & Tail$\uparrow$ & Avg.$\uparrow$ & $\frac{1}{N}\sum_i|T_i-H_i|\downarrow$  \\
\midrule
CausVid & \underline{70.25} & 62.73 & 66.49 & 9.02  \\
Causal Forcing++ & 68.00 & 65.31 & 66.65 & 5.52  \\
Self Forcing & \textbf{71.06} & 66.61 & 68.84 & 6.18  \\
LongLive & 69.37 & \textbf{68.87} & \underline{69.12} & \textbf{3.43}  \\
Rolling Forcing & 69.48 & \underline{68.35} & 68.92 & 5.32  \\
Video-Mirai & 69.49 & 59.58 & 64.54 & 11.90  \\
\midrule
\textbf{Ours} & 70.23 & 68.04 & \textbf{69.14} & \underline{3.64} \\
\bottomrule
\end{tabular}
\end{table}

\end{document}